
\documentclass{article}
\usepackage{microtype}
\usepackage{graphicx}
\usepackage{subfigure}
\usepackage{booktabs} 
\usepackage{caption}
\usepackage{hyperref}
\usepackage{wrapfig}

\usepackage[chang, noautonum]{changdefs}
\usepackage[accepted]{icml2023}


\usepackage{amsmath}
\usepackage{amssymb}
\usepackage{mathtools}
\usepackage{amsthm}

\usepackage[capitalize,noabbrev]{cleveref}

\theoremstyle{plain}
\newtheorem{theorem}{Theorem}

\newtheorem{lemma}[theorem]{Lemma}

\theoremstyle{definition}

\newtheorem{assumption}[theorem]{Assumption}
\theoremstyle{remark}

\usepackage[textsize=tiny]{todonotes}

\icmltitlerunning{Particle-based Online Bayesian Sampling}
\newcommand{\yyf}[1]{\textcolor{red}{}}
\newcommand{\zz}[1]{\textcolor{cyan}{}}
\begin{document}
\twocolumn[
\icmltitle{Particle-based Online Bayesian Sampling}




\begin{icmlauthorlist}
\icmlauthor{Yifan Yang}{yifan}
\icmlauthor{Chang Liu}{chang}
\icmlauthor{Zheng Zhang}{zz}
\end{icmlauthorlist}

\icmlaffiliation{yifan}{Department of Computer Science, University of California, Santa Barbara}
\icmlaffiliation{chang}{Microsoft Research Asia, Beijing}
\icmlaffiliation{zz}{Department of Electrical and Computer Engineering, University of California, Santa Barbara}

\icmlcorrespondingauthor{Chang Liu}{changliu@miscrosoft.com}
\icmlcorrespondingauthor{Zheng Zhang}{zhengzhang@ece.ucsb.edu}

\icmlkeywords{Machine Learning, ICML}

\vskip 0.3in
]



\printAffiliationsAndNotice{}  
\begin{abstract}
Online learning has gained increasing interest due to its capability of tracking real-world streaming data. Although it has been widely studied in the setting of frequentist statistics, few works have considered online learning with the Bayesian sampling problem. In this paper, we study an Online Particle-based Variational Inference (OPVI) algorithm that updates a set of particles to gradually approximate the Bayesian posterior. To reduce the gradient error caused by the use of stochastic approximation, we include a sublinear increasing batch-size method to reduce the variance. To track the performance of the OPVI algorithm with respect to a sequence of dynamically changing target posterior, we provide a detailed theoretical analysis from the perspective of Wasserstein gradient flow with a dynamic regret. Synthetic and Bayesian Neural Network experiments show that the proposed algorithm achieves better results than naively applying existing Bayesian sampling methods in the online setting.
\end{abstract}
\vspace{-2em}
\section{Introduction}
Online learning is an indispensable paradigm for problems in the real world, as a machine learning system is often expected to adapt to newly arrived data and respond in real-time. 
The key challenge in this setting is that the model cannot be updated with all data in history each time, which grows linearly and would make the system unsustainable.
There are quite a few online optimization methods developed over the decades that address the challenge by only taking the last arrived batch of data for each update and by using a shrinking step size to control the increase of error. They have been successfully applied to a wide range of tasks like online ranking, network scheduling and portfolio selection~\citep{yu2017online, pang2022online}.

Online optimization methods can directly be applied to update models that are fully specified by a certain value of its parameters. Beyond such models, there is another class of models known as Bayesian models that treat the parameters as random variables, thus giving an output also as a random variable (often the expectation is taken as the final output on par with the conventional case). The stochasticity enables Bayesian models to provide diverse outputs, characterize prediction uncertainty, and be more robust to adversarial attacks~\citep{hernandez2015probabilistic,li2017dropout,yoon2018bayesian,zhang2019bayesian,tolpin2021probabilistic,wagner2023kalman}. Hence Bayesian models are receiving increasing attention in research and practice, and an online learning method for them is highly desired.

Nevertheless, the learning procedure of Bayesian models is different from conventional models, which poses a challenge in directly applying online optimization methods in an online setting. This is because a Bayesian model is characterized by the distribution of its parameters but not a single value, and the learning task, a.k.a. Bayesian inference, is to approximate the posterior distribution of the parameters given received data. 
A tractable solution is Variational Inference (VI)~\citep{jordan1999introduction,blundell2015weight}, which approaches the posterior using a parameterized approximating distribution, 
which enables optimization methods again~\citep{hoffman2010online,broderick2013streaming,foti2014stochastic,cherief2019generalization}.
However, the accuracy is restricted by the expressiveness of the approximating distribution which is not systematically improvable.

A more accurate method is Monte Carlo which aims to draw samples from the posterior. As the posterior is only known with an unnormalized density function, direct sampling is intractable, and Markov chain Monte Carlo (MCMC) is employed. While it makes sampling tractable, it comes with the issue of sample efficiency due to the correlation among the samples. 
Recently, a new class of Bayesian inference methods is developed, known as particle-based variational inference (ParVI)~\citep{liu2016stein,chen_unified_2018,liu_understanding_2019,zhu2020variance,zhang2020variance,korba_non-asymptotic_2021,liu2022geometry}. They try to approximate the posterior using a set of particles (i.e., samples) of a given size, which are iteratively updated to minimize the difference between the particle distribution from the posterior. The accuracy of the method can be systematically improved with more particles, and due to the limited number of particles, sample efficiency is enforced so as to minimize the difference. While ParVI methods have been successfully applied to the full-batch and mini-batch settings, to our knowledge there is no online version of ParVI.

\zz{Perhaps it's better to itemize the main novel contributions.}
In this work, we develop an Online Particle-based Variational Inference (OPVI) method to meet this desideratum and also provide an analysis on its regret bound which can achieve a sublinear order in the number of iterations\chang{or data size? \yyf{(fixed)}}. The method and analysis are inspired by the distribution optimization view of ParVI on the Wasserstein space, under which we could leverage techniques and theory of conventional online optimization methods. \chang{Verify the following:}To do this, we first extend existing Maximum a Posterior (MAP) methods to better handle the prior term, and give the regret bound analysis for the online MAP algorithm. We then extend the results to the Wasserstein space as an online sampling method by leveraging the Riemannian structure of the space. Notably, we leverage techniques from online optimization that improves upon naively applying existing ParVI methods in an online setting. Here, we bound the dynamic regret under the Wassesterin space by using a trigonometric distance inequality for the inexact gradient descent method. We study the empirical performance of the method on a 2-dimensional synthetic setting which allows easy visualization, and real-world applications using Bayesian neural networks for image classification. The results suggest better posterior approximation and classification accuracy than naive online ParVI methods and online MCMC methods, which is even comparable to full batch results.

\section{Related Work}\label{sec:rw}
Since \cite{cesa2006prediction} study the online properties of VI, there are a couple of works showing online VI gives good performance in practice cases \cite{hoffman_online_2010, hoffman2013stochastic, broderick_streaming_2013}. Furthermore, researchers in \cite{cherief2019generalization} derive the theoretical results for the generalization properties of the Online VI algorithm. Even though online VI is well studied, few papers pay attention to the problem of online MCMC, except \cite{chopin2002sequential, kantas2009overview, 6172210} study a series of sequential Monte Carlo methods that combine importance sampling with Monte Carlo schemes to track the changing distribution. Unfortunately, no previous work considers an online MCMC method from the perspective of optimization methods, not to mention the theory behind them.

Our method employs a gradient descent-based optimization strategy to update particles toward 
the target posterior. However, the target posterior is dynamically changing with streaming data arriving in the system, which makes the optimal solutions change. To solve this problem, we consider a performance metric called dynamic regret in our analysis. In previous research, it has been proved that algorithms that achieve low regret under the traditional regret may perform poorly in dynamic environment \cite{besbes_non-stationary_2015} and it's impossible to achieve a sublinear dynamic regret for an arbitrary sequence of loss functions \cite{yang_tracking_2016}.  To achieve a sublinear regret, researchers propose different constraints on the sequence of loss functions, like the functional variation \cite{besbes_non-stationary_2015}, gradient variation \cite{rakhlin2013optimization, yang2014regret} and path variation \cite{yang_tracking_2016, bedi_tracking_2018,cesa2012new}. However, even though this dynamic problem is essential to be considered in the analysis of Bayesian inference algorithms, no previous papers considered this. As a result, existing theoretical guarantees regarding the online VI (e.g. \cite{cherief2019generalization}) may be insufficient under the dynamic changing online environment.

The stochastic gradient descent algorithm is widely used as an incremental gradient algorithm that offers inexpensive iterations by approximating the gradient with a mini-batch of observations. Through the past decade, it has been used in a wide variety of problems with different variations, like network optimization \cite{pang2022online, zhou2022online} reinforcement learning \cite{liu2021settling, liu2021theoretical}, federated learning \cite{sun2022communication} and recommendation system \cite{yang2020quantile}. However, this method, at the same time, incurs gradient error when approximating the gradient. In most of the novel sampling methods, we normally obtain diverse solutions by injecting diffusion noises (e.g. Langevin Dynamic (LD) \cite{neal2011mcmc}, Stochastic Gradient Langevin Dynamics (SGLD) \cite{welling_bayesian_2011}, which makes this type of algorithm sensitive to the noise. For Stein Variational Gradient Descent (SVGD) \cite{liu2016stein}, there is also a similar instability observed in the experiments, even though the reason is still unknown. This instability makes reducing the stochastic gradient error important. 

To reduce the gradient error, researchers studied multiple variance reduction methods, like using adaptive learning rates and increasing batch size. In the previous work, an adaptive learning rate was used to adapt the optimization to the most informative features with Adagrad \cite{pmlr-v97-ward19a} and estimate the momentum for Adam\cite{kingma2014adam}. Compared with the adaptive methods, the increasing batch size methods have greater parallelism and shorter training times \cite{smith2017don} and are also studied in offline and online cases \cite{friedlander_hybrid_2012, zhou2018new}, which shows great importance to achieve applicable convergence rate and sublinear regret bound. Especially, \cite{bedi_tracking_2018, yang_tracking_2016}, give algorithms that consider a more general case of optimization with inexact gradient.
\section{The online Maximum a Posterior on Euclidean Space $\mathcal{W}$}\label{sec:omap}
In this section, we first introduce an online MAP algorithm on Euclidean decision space $\mathcal{W}$ with gradient descent method, which helps the reader to understand our OPVI sampling method on Wasserstein space. Here, we give some prior knowledge about the online MAP problem and the dynamic regret metric. Then, we give a detailed policy using an online stochastic gradient descent algorithm to solve the online MAP problem and a detailed theoretical analysis based on the dynamic regret metric. 
\subsection{Preliminaries} \label{sec:err_def}
For an online MAP algorithm run with time slots $t\in[1, T]$, let $\mathcal{W} \in \mathcal{R}^d$ denote a convex set, set $w_t\in\mathcal{W}$ be some parameter of interest and $\mathcal{N}_T = \{d_1, \cdots, d_{N_T}\}$ be the set of i.i.d. observations. In a typical problem of MAP, we aim to maximize a target posterior $p(w):=p_0(w)\prod_{k=1}^{N_T} p (d_k\mid w)$, where we usually take logarithm on both sides to simplify the computation as $\log p(w) = \log p_0(w) + \sum_{k}\log p(d_k|w) $. 

Different from the offline MAP, we set a $\eta_t = \frac{6}{\pi^2 t^2}$ adaptive weight for the prior in our online setting, which divides the whole prior for each update with $\sum_{t=1}^T\eta_t = 1$ when $T \rightarrow \infty$. Then, the goal of the online MAP problem on $\mathcal{W}$ is to find parameter $w_t$ that maximizes the cumulative of a linear combination of minus likelihood and partial prior, which can be given as:
\vspace{-0.5em}
\begin{align}\label{eq:aim}
    \max_{w_t \in \mathcal{W}}  \sum_{t=1}^T(\sum_{k=1}^{N_T}\log p(d_k\mid w_t) + \eta_t\log p_0(w_t)),
\end{align}
To simplify the notation, we use $c_t^k(w_t):=-\log p(d_k\mid w)$ to denote the log-likelihood with data $d_k$ and $c_0(w_t):=-\log p_0(w_t)$ to denote the log-prior, where $c$ is called the cost function in the literature of optimization and we take minus logarithm since we want to make sure the cost function to be positive all the time. We denote $c_t(w_t) = \sum_{k=1}^{N_T} c_t^k(w_t)$ as the true likelihood considering all data in the dataset. Then, we can formulate the goal eq. (\ref{eq:aim}) to be an optimization problem with $c_t+ \eta_t c_0$ as the objective function and follow the goal of:
\vspace{-0.5em}
\begin{align*}
    \min_{w_t \in \mathcal{W}} \sum_{t=1}^T c_t(w_t) + \eta_t c_0(w_t)
\end{align*}
As we have mentioned in Section \ref{sec:rw}, the target posterior is dynamically changing with the new observations, we are interested in using dynamic regret as the performance metric for our problem, which is defined as the difference between the total cost incurred at each time slot and a sequence of optimal solutions $\{w_t^*\}$ in hindsight, i.e.,
\vspace{-0.5em}
\begin{align}\label{eq:reg_def}
        R(T) = \sum_{t=1}^T c_t(w_t)\!+  \eta_t c_0(w_t) \!- c_t(w_t^*) \!- \eta_t c_0(w_t^*).
\end{align}
\chang{Static regret: $R(T) := \sum_{t=1}^T c_t(w_t) + \frac1T \sum_{t=1}^T c_0(w_t) - \sum_{t=1}^T c_t(w^*) + c_0(w^*)$, where $w^* := \argmin_{w' \in \clW} \sum_{t=1}^T c_t(w') + c_0(w')$.}
In this paper, instead of using all data in $\mathcal{N}_T$, we consider using mini-batch $\mathcal{B}_t$ to approximate the gradient $\nabla c_t(w_t)$ as $\nabla \hat{c}_t$, where the approximation leads to a gradient error $e_t:=\nabla \hat{c}_t(w_t)- \nabla c_t(w_t)$ that can calculated by:
\begin{align}\label{eq:err_def}
     e_t=\frac{1}{B_t} \sum_{k\in\mathcal{B}_t}\nabla c^k_t(w_t) \!- \nabla c_t(w_t), 
\end{align} 
where $B_t= |\mathcal{B}_t|$ is the batch size.

Note that the gradient error $e_t$ can be deterministic or stochastic, depending on the way we set up the mini-batch. In this paper, we choose to select samples for mini-batch $\mathcal{B}_t$ arbitrarily from $\mathcal{N}_T$, which makes the gradient error to be stochastic in this paper. As a result, the expectation of $\|e_t\|$ can be bounded by some time-varying variable $\epsilon_t$ as $\mathbb{E}[\|e_t\|] \leq \epsilon_t$\chang{$\mathbb{E}[\|e_t\|^2] \leq \epsilon_t^2$}. Then, we introduce an error bound $E_T$ to measure the cumulative gradient error lead by the stochastic gradient approximation over $t\in[1, T]$, which is given by:
\vspace{-0.5em}
\begin{align}
\label{eq:err}
    E_T := \sum_{t=1}^T \epsilon_t
\end{align}
We will show a sublinear increasing batch size is enough to keep $E_T$ growing sublinear, which enables the online MAP algorithm to enjoy a sublinear dynamic regret.
\subsection{Dynamic Algorithm for Online Maximum a Posterior}
It is well known that the online gradient descent algorithm can be used to solve online optimization problems \cite{zinkevich_online_2003, besbes_non-stationary_2015, yang4040704vflh}. Here, we give an online stochastic gradient descent algorithm with increasing batch size for the online MAP problem in the following updating policy:
\begin{align}\label{eq:grad}
    \mathbf{w}_{t}= \begin{cases}\mathbf{w}_1 \in \mathcal{W} & t=1 \\ \Pi_{\mathcal{W}}[w_{t-1} - \alpha v_t(w_{t-1})] & t>1\end{cases},\\
    \text{where } v_t(w_{t-1}) = \nabla \hat{c}_{t-1}(w_{t-1})  + \eta_t \nabla c_0(w_{t-1})\nonumber
\end{align}
where $\Pi_{\mathcal{W}}$ is the projection back to the convex set $\mathcal{W}$. 
\yyf{projection, need an additional inequality in proof}
\chang{Do not use the $+e_t$ form. This is not the algorithm. Use $\hat{c}_t$ or $c_t^i$ (unify symbol). \yyf{(fixed)}}

We will illustrate the relationship between $e_t$ and $B_t$ in the following analysis. Next, we first introduce some widely used assumptions required for the theoretical analysis.
\begin{assumption} \label{assm:bound}
    (Bounded Convex Set) For any two decisions $w_1, w_2 \in \mathcal{W}$, we have $d(w_1, w_2) \leq R$.
\end{assumption}
\begin{assumption} \label{assm:lipschitz}
(Convexity and Lipschitz smooth) The function $c_t+\eta_t c_0$ is convex and Lipschitz smooth, so its derivatives are Lipschitz continuous with constant $L$ with a constant $L$, i.e., for two real $w_1, w_2 \in \mathcal{W}$, we have:
\vspace{-1em}
\begin{align*}
   & \| \nabla c_t(w_1) + \eta_t \nabla c_0(w_1) - \nabla c_t(w_2) - \eta_t \nabla c_0(w_2) \| \nonumber
    \\& \leq L\|w_1 - w_2\| \quad t\in [1, T].
\end{align*}
\end{assumption}
\chang{Add a convexity assumption (fixed, added in the above assu.)}
\begin{assumption} \label{assm:gradient}
(Vanishing gradient) We assume the optimal solutions $w^*_t$ lie in the interior of the convex set $\mathcal{W}$, where we assume there exists $w^*_t$ such that $\nabla c_t(w^*_t) + \eta_t\nabla c_0(w^*_t)= 0$
\end{assumption}

We give a sublinear regret upper bound in the next subsection, which means $\|w_t - w_t^*\|$ is decreasing and the parameter of interest $w_t$ can converge to the dynamic changing optimal solutions $w_t^*$ when $T$ is large enough.\chang{May need an explanation \yyf{(fixed)}} That indicates we can obtain a promising MAP result with the policy in eq. (\ref{eq:grad}). Furthermore, we also give an analysis of the influence of the increasing batch-size setting on the performance of the algorithm.
\subsection{Theoretical Analysis for Online MAP}\label{sec:prof_w}
In this section, we begin with the proof of the online MAP algorithm following the policy in eq. (\ref{eq:grad}) over the Euclidean space $\mathcal{W}$. As we mentioned in Section \ref{sec:rw}, it is impossible to achieve a sublinear regret bound for any sequence of cost functions. To solve this problem, we consider a path variation budget $V_T$ for the sequence of optimal solutions $\{w_t^*\}$, which bound the cumulative path length of the optimal solutions as $V_T:=\sum_{t=1}^T \|w_t^* - w^*_{t-1}\|$.

 
 We give the following Theorem following the proof of \citep[Theorem 2]{bedi_tracking_2018}. Note that following eq. \ref{eq:err_def}, we use true gradient $\nabla c_t(w_t)$ and a gradient error $e_t$ to represent the approximated gradient $\nabla \hat{c}_t(w_t)$ to highlight the influence of the gradient error and simplify the proof. The result is summarized in Theorem \ref{theorem:T1}, which gives the sublinear bound for the dynamic regret $\mathcal{R}(T)$.

\begin{theorem}\label{theorem:T1}
(Regret Bound under $\mathcal{W}$ \citep[Theorem 2]{bedi_tracking_2018}) Under the Assumption \ref{assm:bound} - \ref{assm:gradient}, given a sequence of optimal solutions $\{w_t^*\}$, variational budget $V_T$ and gradient error bound $E_T$. Following the updating policy in eq. (\ref{eq:grad}) on Euclidean Space $\mathcal{W}\in\mathcal{R}^n$, we have the dynamic regret:
\begin{align*}
        \mathbb{E}[\mathcal{R}(T)]
    \leq \mathcal{O}(\max(1, E_T, V_T)).
\end{align*}
\end{theorem}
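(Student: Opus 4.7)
The plan is to derive a one-step ``potential'' inequality from the projected stochastic update~(\ref{eq:grad}), then sum across $t\in[1,T]$ while absorbing the drift of the moving minimizer with the path variation $V_T$ and the stochastic gradient error with $E_T$. Writing $f_t := c_t + \eta_t c_0$ so that $\mathcal{R}(T) = \sum_{t=1}^T (f_t(w_t)-f_t(w_t^*))$, and decomposing the search direction as $\nabla f_t(w_t) + e_t$, I would begin with non-expansiveness of $\Pi_{\mathcal{W}}$:
\begin{align*}
\|w_{t+1}-w_t^*\|^2 \le \|w_t-w_t^*\|^2 &- 2\alpha\langle\nabla f_t(w_t), w_t-w_t^*\rangle \\
&- 2\alpha\langle e_t, w_t-w_t^*\rangle + \alpha^2\|\nabla f_t(w_t)+e_t\|^2.
\end{align*}
Convexity from Assumption~\ref{assm:lipschitz} turns the first inner product into $f_t(w_t)-f_t(w_t^*)$, converting the display into a per-step regret bound in terms of $\|w_t-w_t^*\|^2-\|w_{t+1}-w_t^*\|^2$ plus two residual terms.

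The residuals are controlled by the remaining assumptions. Cauchy--Schwarz with Assumption~\ref{assm:bound} gives $|\langle e_t, w_t-w_t^*\rangle|\le R\|e_t\|$. Assumption~\ref{assm:gradient} makes $\nabla f_t(w_t^*)=0$, so Assumption~\ref{assm:lipschitz} yields $\|\nabla f_t(w_t)\|=\|\nabla f_t(w_t)-\nabla f_t(w_t^*)\|\le LR$ and hence $\|\nabla f_t(w_t)+e_t\|^2\le 2L^2R^2+2\|e_t\|^2$. Taking expectations and applying $\mathbb{E}\|e_t\|\le\epsilon_t$, $\mathbb{E}\|e_t\|^2\le\epsilon_t^2$ (Section~\ref{sec:err_def}) reduces the stochastic pieces to $\epsilon_t$ and $\epsilon_t^2$. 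To telescope across $t$ the anchor must shift from $w_t^*$ to $w_{t+1}^*$; a triangle-inequality bridge on a set of diameter $R$,
\begin{align*}
\|w_{t+1}-w_{t+1}^*\|^2 \le \|w_{t+1}-w_t^*\|^2 + 2R\|w_{t+1}^*-w_t^*\| + \|w_{t+1}^*-w_t^*\|^2,
\end{align*}
contributes $O(R V_T)$ after summing. Collecting everything and dividing by $2\alpha$ produces a bound of the shape $\mathbb{E}[\mathcal{R}(T)] \le (R^2+RV_T)/(2\alpha) + RE_T + \alpha T L^2 R^2 + \alpha\sum_t\epsilon_t^2$.

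The main obstacle is the $\alpha T L^2 R^2$ contribution from the gradient-norm term: a fixed step size produces a linear-in-$T$ regret, while $\alpha\propto 1/\sqrt{T}$ re-introduces $\sqrt{T}$ through the $1/\alpha$ factor. Following \citep[Theorem 2]{bedi_tracking_2018}, the remedy is to invoke co-coercivity of $\nabla f_t$ (a free consequence of convex $L$-smoothness), which for any $\alpha\le 1/L$ lets one absorb $\alpha^2\|\nabla f_t(w_t)\|^2$ into the per-step regret term $2\alpha(f_t(w_t)-f_t(w_t^*))$, eliminating the linear-in-$T$ piece. What remains is $(R^2+RV_T)/(2\alpha) + RE_T + \alpha\sum_t\epsilon_t^2$, which with a fixed constant step collapses to $\mathcal{O}(\max(1, E_T, V_T))$ as claimed, provided the increasing batch-size schedule keeps $\sum_t\epsilon_t^2 = O(E_T)$. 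The delicate part is the co-coercivity / step-size balancing together with the bookkeeping of the stochastic cross terms between $\nabla f_t(w_t)$ and $e_t$; the $V_T$ bridge and the Cauchy--Schwarz handling of $e_t$ itself are mechanical.
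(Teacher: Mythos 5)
Your proposal is correct and follows essentially the same route as the paper: the same one-step expansion of $\|w_{t+1}-w_t^*\|^2$ via non-expansiveness of the projection, convexity for the inner product, the smoothness-plus-vanishing-gradient inequality $\|\nabla f_t(w_t)\|^2 \le 2L\,(f_t(w_t)-f_t(w_t^*))$ to absorb the gradient-norm term into the per-step regret (the paper's eq.~(\ref{eq:fact}), which is exactly the co-coercivity remedy you describe), a triangle-inequality anchor shift summing to $O(RV_T)$, and the expectation bounds on $e_t$ summing to $O(E_T)$. The only substantive difference is bookkeeping: you telescope squared distances directly, whereas the paper first converts to first-power distances via $\sqrt{a^2-b+c^2}\le a-\tfrac{b}{2a}+c$ and telescopes those, which turns the $2\alpha\epsilon_t R$ term into $\sqrt{2\alpha\epsilon_t R}$ and leaves a looser $\sqrt{2\alpha R T E_T}$ contribution; your variant is, if anything, the cleaner of the two.
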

\begin{proof}
 Detail of the proof can be found in Appendix \ref{appendix:T1}.
\end{proof}
To further find the relationship between $E_T$ and $B_t$ to bound $E_T$, we give some analysis for the gradient error led by the stochastic batch sampling with the sublinear increasing batch size following Theorem \ref{theorem:T1}. Base on section 2.8 in \cite{lohr2021sampling}, we have:
\begin{align}\label{eq:ek}
    \mathbb{E}[\|e_t\|^2] = \frac{N_T-B_t}{N_TB_t} \Lambda^2,
\end{align}
where $N_T$ is the total number of data samples we have and $\Lambda$ is a bound on the sample variance of the gradients, which is defined by:
\begin{align*}
    \frac{1}{N_T-1} \sum_{i=1}^{N^T}\left\|\nabla c_t^i(\mathbf{w})-\nabla c_t(\mathbf{w})\right\|^2 \leq \Lambda^2 \quad \mathbf{w} \in \mathcal{W}
\end{align*}
To fulfill the requirement of $\mathbb{E}[\|e_t\|^2]$ in eq. (\ref{eq:ek}), we assume $\epsilon_t = \sqrt{\frac{1}{B_t} - \frac{1}{N_T}}$ and the sublinear increasing batch-size as $B_t = \frac{N_T t^\rho}{N_T+ t^\rho} \quad \rho >0$. Then, we can bound $E_T$ as:
\begin{align}\label{eq:err_comp}
    E_T = \sum_{t=1}^T \epsilon_t  \leq  \sum_{t=1}^T \sqrt{\frac{1}{t^q}} \leq \frac{2}{2-\rho}T^{1-\frac{\rho}{2}}
\end{align}
We can see when the batch size $B_t$ is growing sublinear, the gradient error bound $E_T$ is sublinear. Thus if the variational budget $V_T$ is constrained to be sublinear, the regret bound is proved to be sublinear. Note that in the regret analysis, we set a static stepsize $\alpha$ for convenience. The algorithm can also achieve a sublinear regret bound when the stepsize is set to be digressive like $\alpha_t = t^{0.55}$. Next, we illustrate why a static batch size fills to achieve a sublinear regret.

\textbf{Remark:} 
Set the batch size to be static as $B$. The agent update $w_t$ for over $T$ rounds and use a total of $N_T$ data samples, where $N_T$ can be calculated by $N_T = \sum_{t=1}^T B = BT$. Following a similar setting in eq. (\ref{eq:err_comp}), we bound the gradient error over $t\in[1, T]$ as:
\begin{align*}
       E_T \!= \sum_{t=1}^T \epsilon_t  \!=\sum_{t=1}^T \sqrt{\frac{1}{B} \!- \frac{1}{N_T}} \leq \sum_{t=1}^T \sqrt{\frac{1}{B}(1-\frac{1}{T})}\!\leq \mathcal{O}(T)
\end{align*}
which gives a linear increasing gradient error bound $E_T \leq \mathcal{O}(T)$. That makes it impossible to give a sublinear regret bound, which is necessary to ensure the algorithm can finally converge to the optimal solutions.

\section{Online Particle-based Variational Inference on Wasserstein Space $\mathcal{P}_2(\mathcal{W})$}\label{sec:opvi}
In this section, we propose the OPVI algorithm on $\mathcal{P}_2(\mathcal{W})$, which formulate the online MAP problem in Section \ref{sec:omap} as an online sampling method from the perspective of Wasserstein gradient flow. To begin with, we first introduce some preliminary knowledge about the $2$-Wasserstein Space $\mathcal{P}_2(\mathcal{W})$, as well as its Riemannian structure and the gradient flow on it. Then, we give a brief introduction to a well-known ParVI method, called SVGD \cite{liu2016stein} and take it as an example to illustrate how to simulate a ParVI problem as a gradient flow on $\mathcal{P}_2(\mathcal{W})$. Based on this idea, we give the theoretical analysis for OPVI as a distribution optimization flow on $\mathcal{P}_2(\mathcal{W})$ to show a sublinear dynamic regret. For convenience, we only consider Wasserstein Space supported on the Euclidean space $\mathcal{W}$ in our analysis. 

Here, we first clarify the notation used in this section. We use $\mathcal{C}_c^\infty$ as a set of compactly supported $R^D-$valued functions on $\mathcal{W}$ and use $C_c^\infty$ to denote the scalar-valued functions in $\mathcal{C}_c^\infty$. Except for the Euclidean space $\mathcal{W}$ and Wasserstein space $\mathcal{P}_2(\mathcal{W})$ we just mentioned, we consider two other types of space in this paper, the Hilbert space $\mathcal{L}^2_q$ and the vector-valued Reproducing Kernel Hilbert Space (RKHS) $\mathcal{H}^D$ of a kernel $K$. The Hilbert space $\mathcal{L}^2_q$, is a space of $\mathbf{R}^D$-valued functions $\left\{u: \mathbb{R}^D \rightarrow \mathbb{R}^D \mid \int\|u(x)\|_2^2 \mathrm{~d} q<\infty\right\}$ with inner product $\langle u, v\rangle_{\mathcal{L}_q^2}:=\int u(x) \cdot v(x) \mathrm{d} q$. The RKHS $\mathcal{H}$ is a kernel version of the Hilbert space $\mathcal{L}^2_q$, which is the closure of linear span $\left\{f: f(x)=\sum_{i=1}^m a_i k\left(w, w_i\right), a_i \in \mathbb{R}, m \in \mathbb{N}, w_i \in \mathcal{W}\right\}$ equipped with inner products $\langle f, g\rangle_{\mathcal{L}^2_q}=\sum_{i j} a_i b_j k\left(w_i, w_j\right)$ for $g(w)=\sum_i b_i k\left(w,w_i\right)$. 
\subsection{The Wasserstein Space $\mathcal{P}_2(\mathcal{W})$, its Riemannian Structure and the Gradient Flow}

 Generally, the Wasserstein space is a metric space equipped with Wasserstein distance $d(\cdot,\cdot)$. Set $P(\mathcal{W})$ as the space of probability measures on the Euclidean support space $\mathcal{W}$. The $2$-Wasserstein space on $\mathcal{W}$ can be defined as $\mathcal{P}_2(\mathcal{W}):=\{\mu \in P(\mathcal{W}): \int_{\mathcal{W}}\|w\|^2 d \mu(w)<\infty\}$. Since the Riemannian structure of Wasserstein space is discovered \cite{otto2001geometry, benamou2000computational}, several interesting quantities have been defined, like the gradient and the inner product on it.

To define the gradient of a smooth curve $(q_t)_t$ on $\mathcal{P}_2(\mathcal{W})$, we can set a time-dependent vector field $v_t(w)$ on $\mathcal{W}$, such that for a.e. $t \in \mathbb{R}, \partial_t q_t+\nabla \cdot\left(v_t q_t\right)=0$ and $v_t \in \overline{\left\{\nabla \varphi: \varphi \in C_c^{\infty}\right\}} ^{\mathcal{L}_{q_t}^2}$, where the overline means closure \cite{villani2009optimal}. Note that the vector field $v_t$ here is the so-called tangent vector of the curve $(q_t)_t$ at $q_t$ and the closure is denoted as tangent space $T_{q_t}\mathcal{P}_2$ at $q_t$, whose elements are the tangent vectors for the curves passing through the point $q_t$. The relation between $T_{q_t}\mathcal{P}_2$, $v_t$ and $\mathcal{P}_2(\mathcal{W})$ can be found in Fig. \ref{fig:wass}. The inner product in the tangent space $T_{q_t}\mathcal{P}_2$ is defined on $\mathcal{L}^2_q$, which defines the Riemannian structure on $\mathcal{P}_2(\mathcal{W})$ and is consistent with the Wasserstein distance due to the Benamou-Brenier formula \cite{benamou2000computational}.

An important role of the vector field representation is that we can approximate the change of distribution $q_t$ within a distribution curve $(q_t)_t$. For a single update in each time slot, we can set $(\id+\varepsilon v_t)_\#q_t$ as a first-order approximation of the updated distribution $q_{t+1}$ in the next time slot \cite{ambrosio2005gradient}. Therefore, for a set of particles $\{x_t^{(i)}\}_i$ that obey distribution $q_{t}$ at time $t$, we can update these particles with a stepsize of $\varepsilon$ as $\{x_t^{(i)} + \varepsilon v_t(x_t^{(i)})\}_i$, to approximate distribution $q_{t+1}$ in time $t+1$, when $\varepsilon$ is small. We show this approximation as a red arrow in Fig. \ref{fig:wass}.
\begin{figure}[t] 
	\includegraphics[width=0.8 \columnwidth]{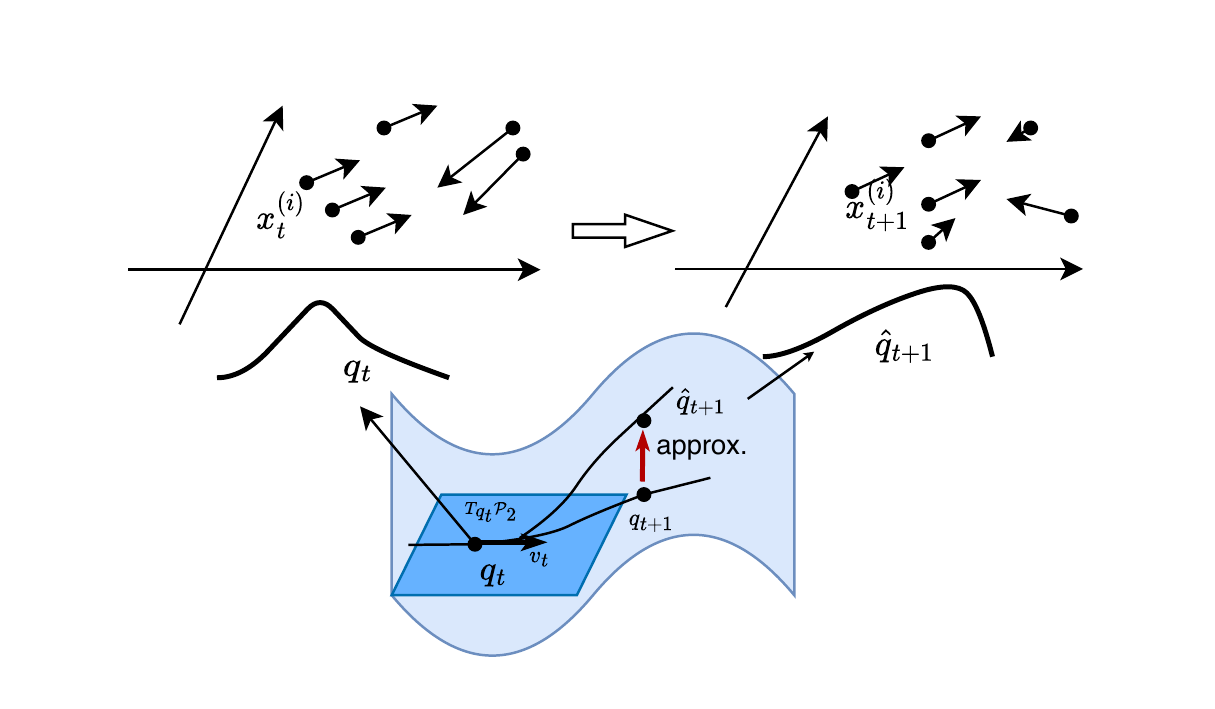} 
 \centering
	\caption{Illustration for the updating of $q_t$ over the gradient flow $(q_t)_t$ on $\mathcal{P}_2(\mathcal{W})$, and the relationship between the update of particles $\{x_t^{(i)}\}$ over $\mathcal{H}$ and the update of distribution $q_t$ over $\mathcal{P}_2(\mathcal{W})$.}
	\label{fig:wass}
\end{figure}



Another important concept on $\mathcal{P}(\mathcal{W})$ is the definition of the gradient flow. Given a function $F$, the gradient flow can be described as the family of descending curves $\{(q_t)_t\}$ that maximize the decreasing rate of the derivative of $F$. In $\mathcal{P}_2(\mathcal{W})$, the tangent vector of the gradient flow $(q_t)_t$ can be defined by the gradient of $F$ at $q_t$, which is given by:
\begin{align*}
\operatorname{grad} F\left(q_t\right):
=\left.\mathop{\max\cdot \argmax}\limits_{v:\|v\|_{T_{q_t} \mathcal{P}_2}=1}\frac{\mathrm{d}}{\mathrm{d} \varepsilon} F\left((\mathrm{id}+\varepsilon v)_{\#} q_t\right)\right|_{\varepsilon=0},
\end{align*}
where we define a measurable transformation $\mathcal{T}: \mathcal{W} \rightarrow \mathcal{W}$ and denote $\mathcal{T}_{\# q_t}$ as the $\mathcal{T}$-transformed distribution for $q_t$. 

In the task of Bayesian inference, our goal is to minimize the KL-divergence between a current estimated distribution $q_t$ and the target posterior $p$ as $\mathop{KL}_p(q_t):=\int_{\mathcal{W}}\log(q_t|p)d q_t$, which has the tangent vector for its gradient flow $(q_t)_t$ as a vector field of:
\begin{align*}
    v_t = -\nabla_{q_t} \mathrm{KL}_p(q_t) = \nabla \log p - \nabla \log q_t,
\end{align*}
\subsection{Particle-based Variational Inference Methods}
In this section, we first use SVGD as an example to illustrate the ParVI methods. Then we show how to simulate SVGD as the gradient flow on Wasserstein space $\mathcal{P}_2(\mathcal{W})$, which can help the analysis of OPVI in the following subsection.
For SVGD, let $\{x^{(i)}_t\}_{i=1}^n$ be a set of particles that obey an empirical measure of distribution $q_t$.  We initialize $q_t$ as some simple distribution $q_0$, then use a vector field $v$ to update these particles toward the target posterior $p$: $x_{t+1}^{(i)} = x_{t}^{(i)} + \varepsilon v(x_{t}^{(i)})$, where $v$ should be chosen to maximize the decreasing of the KL-divergence  $-\left.\frac{\mathrm{d}}{\mathrm{d} \varepsilon} \mathrm{KL}_p\left((\mathrm{id}+\varepsilon v)_{\#} q\right)\right|_{\varepsilon=0}$. In SVGD, the vector field is chosen to be optimized over RKHS $\mathcal{H}$ with a closed-form solution:
 \begin{align}\label{eq:svgd_vec}
     v^{\text{SVGD}}_{\mathcal{H}}(\cdot):= \nabla \log p(x)k(x, \cdot) + \nabla k(x, \cdot)
 \end{align}
Note that the updating of SVGD particles is actually an approximation of the $\mathcal{P}_2(\mathcal{W})$ gradient flow by taking $\mathcal{H}$ as its tangent space instead of $\mathcal{L}^2_{q_t}$, since the function in $\mathcal{H}$ is roughly a kernel smoothed function in $\mathcal{L}^2_{q_t}$ \cite{liu_quantum-inspired_2019}. Thus, the vector field $v_{\mathcal{H}}^{\text{SVGD}}$ in eq. (\ref{eq:svgd_vec}) can be used to approximate the vector field $v_{\mathcal{L}^2_{q_t}}^{\text{SVGD}}$ in $\mathcal{L}^2_{q_t}$ on $P_2(\mathcal{W})$ \citep[Theorem 2]{liu_understanding_2019}, where the solution gives:
\begin{align}\label{eq:flow}
    v_{\mathcal{H}}^{\text{\text{SVGD}}} = \mathop{\max\argmax}\limits_{v\in \mathcal{H}, \|v\|_{\mathcal{H} = 1}}\langle v^{\text{SVGD}}_{\mathcal{L}^2_{q_t}},v\rangle 
    _{\mathcal{L}^2_{q_t}}
\end{align}
That enables us to use $v_{\mathcal{L}^2_{q_t}}^{\text{SVGD}}$ to approximate the vector field $v_{\mathcal{H}}^{\text{SVGD}}$ on $P_2(\mathcal{W})$ in the following analysis, which like doing a projection from $\mathcal{H}$ to $\mathcal{L}^2_{q_t}$.
\yyf{vector field approximation, need to be verified}
\subsection{Online Particle-based Variational Inference on $\mathcal{P}_2(\mathcal{W})$}
In this section, we aim to develop an online sampling method on $\mathcal{P}_2(\mathcal{W})$ and proposed the OPVI algorithm. We first illustrate the policy of OPVI over RKHS $\mathcal{H}$. Then, we interpret the OPVI as the gradient flow on $\mathcal{P}_2(\mathcal{W})$ and conduct the theoretical analysis by transferring the proof in \ref{sec:omap} from Euclidean space $\mathcal{W}$ to Wasserstein space $\mathcal{P}_2(\mathcal{W})$. 
Note that we use $v_t^{\text{OPVI-}\mathcal{H}}$ as the vector field on RKHS $\mathcal{H}$ and $v_t^{\text{OPVI-}\mathcal{L}^2}$ as the vector field on $\mathcal{L}_{q_t}^2$.

We begin with reviewing the KL-divergence in an offline setting, which is given as:
\begin{align*}
    &\KL_{p}(q_t) =  \bbE_{q_t} [\log p ] - \bbE_{q_t} [\log q_t] \nonumber \\
= & \sum_{k=1}^{N_T} \bbE_{q_t} [\log p(d_k | \cdot)]+ \bbE_{q_t} [\log p_0] - \bbE_{q_t} [\log q_t],
\end{align*}\yyf{I'm not sure if we need a constant behind and why}
where $N_T$ is the number of data samples in the dataset. Following a similar idea as the online MAP algorithm, we set a $\eta_t = \frac{6}{\pi^2 t^2}$ adaptive weight for the prior in our online setting and using mini-batch with batch size $B_t$ to approximate the likelihood. Thus, we give an online stochastic version of KL-divergence between $q_t$ and the dynamic changing posterior $p_t$ as:
\begin{align}\label{eq:online_kl}
    &\text{O-KL}_{p_t}(q_t) \nonumber \\
    =& \sum_{k=1}^{B_t} \bbE_{q_t} [\log p(d_k | \cdot)]+ \eta_t\bbE_{q_t} [\log p_0] - \bbE_{q_t} [\log q_t]
\end{align}
Similar to SVGD, we first draw a set of particles $\{x^{(i)}_0\}_{i=1}^n$ that obey some simple initial distribution $q_0$. Then, we update these particles with a gradient descent updating scheme with step size $\alpha$:
\begin{align*}
    x^{(i)}_{t+1} = x^{(i)}_t + \alpha v_t^{\text{OPVI-}\mathcal{H}},
\end{align*}
where $v_t^{\text{OPVI-}\mathcal{H}}$ is the vector field on $\mathcal{H}$ that maximizes the decrease of online stochastic KL-divergence $-\frac{d}{d\alpha}\text{O-KL}_{p_t}((id +\alpha v_t)_{\#q})|_{\alpha=0}$ to give a closed-form solution:
\begin{align*}
    & v_t^{\text{OPVI-}\mathcal{H}}(\cdot) = \bbE_{q(x)} [K(x,\cdot) \nabla \sum_{k=1}^{B_t}\log p(d_k|x) \nonumber \\
    & + \eta_t K(x,\cdot) p_0(x) +\nabla K(x,\cdot)],
\end{align*}

where $K(x, x')$ is satisfied by commonly used kernels like the exponential kernel $K(x, x') = \exp(-\frac{1}{h}\|x-x'\|^2_2)$and the general workflow of the OPVI algorithm is summarized in Alg. \ref{alg:main}.

\begin{algorithm}[t]
\caption{Online Particle-based Variational Inference}
\label{alg:main}
\begin{algorithmic}
\STATE Initialize particles $\{x_0^{(i)}\}_{i=1}^{N}$
\FOR{$t = 1, \cdots, T$}
\STATE 
\vspace{-1.5em}
\begin{align*}
    x_{t+1}^{(i)} = x_{t}^{(i)} + \alpha v_t^{\text{OPVI-}\mathcal{H}}(x_{t}^{(i)})
\end{align*}
\vspace{-1.5em}
 where:
\begin{align*}
    & v_t^{\text{OPVI-}\mathcal{H}}(x_{t}^{(i)}) = \bbE_{q(x)} [K(x,x_{t}^{(i)}) \nabla \sum_{k=1}^{B_t}\log p(d_k|x) \nonumber \\
    & + \eta_t K(x,x_{t}^{(i)}) \nabla p_0(x) +\nabla K(x,x_{t}^{(i)})]
\end{align*}
\ENDFOR
\end{algorithmic}
\end{algorithm}

\subsection{Proof of Dynamic Regret Bound under $\mathcal{P}_2(\mathcal{W})$}
To begin with, we first formulate the updating rule in Alg. \ref{alg:main} as a Wasserstein gradient flow. Here, we ignore the kernel smooth used in the implementation of the algorithm by approximating the vector field $v_t^{\text{OPVI-}\mathcal{H}}$ on RKHS $\mathcal{H}$ with the vector field $v_t^{\text{OPVI-}\mathcal{L}^2}$ on Hilbert space $\mathcal{L}^2$. To simplify the proof, we denote $c_t^k(q_t) =  -\bbE_{q_t} [\log p(d_k | \cdot)]$ and $c_t^0(q_t) =-\eta_t\bbE_{q_t} [\log p_0] + \bbE_{q_t} [\log q_t] $ in eq. (\ref{eq:online_kl}) and follow eq. (\ref{eq:err_def}) to represent the stochastic approximation as the sum of the true gradient and a gradient error $e_t$, which gives:
\begin{align*}
    v_t^{\text{OPVI-}\mathcal{L}^2}(q_t) = - (c_t(q_t) + e_t + c_t^0(q_t))
\end{align*}
Then, the updating of the particles can be formulated as an optimal transport for distribution $q_t$ over $\mathcal{P}_2(\mathcal{W})$ as:
       \begin{align}\label{eq:wass_update}
           q_{t+1} = \Exp_{q_t}(-\alpha (c_t(q_t) + e_t + c_t^0(q_t)) )
       \end{align}
 Before we give the proof for the regret bound, we first re-assume some assumption under the $\mathcal{P}_2(\mathcal{W})$.
\begin{assumption}
\label{ass:g_bounded}
    (Bounded geodesically-convex (g-convex) set on $\mathcal{P}_2(\mathcal{W})$ ) Assume $\mathcal{K}$ to be a g-convex set on some Wasserstein space $\mathcal{P}_2(\mathcal{W})$ supported on $\mathcal{W}$. From Theorem 2 of \cite{gibbs2002choosing}, we can establish a bound for the maximum Wasserstein distance in a bounded support space with $\dim(\mathcal{W}) < R$. Then $\forall q_1, q_2 \in \mathcal{P}_2(\mathcal{W})$, we have:
    \begin{align*}
        d_\mathcal{K}(q_1, q_2) \leq 1 + R
    \end{align*}
    which bound the geodescially convex set $\mathcal{K}$.
\end{assumption}
\begin{assumption}
\label{ass:g_L}
    (Geodesically-L-Lipschitz (g-L-Lipschitz)). Similar to the definition over $\mathcal{W}$, we assume $ c_t(q_1) + c_t^0(q_1)$ to be a g-convex function and has a geodesically L-Lipschitz continuous gradient on $\mathcal{P}_2{\mathcal{W}}$ if there exists a constant $L>0$ that:
    \begin{align*}
        |\nabla c_t(q_1) + \nabla c_t^0(q_1) -\nabla  c_t(q_2) -  \nabla c_t^0(q_1)| \leq L\cdot d(q_1, q_2), \nonumber \\
        \forall q_1,q_2\in\mathcal{P}_2(\mathcal{W}) \nonumber,
    \end{align*}
    where $d(a,b)$ should be some Wasserstein distance.
\end{assumption}
Compared with the proof on $\mathcal{W}$, the key difference is the way to obtain Lemma \ref{lemma:L1}. Instead of updating a set of parameters of interest over $\mathcal{W}$, we update the distribution $q_t$ by optimal transport over $\mathcal{P}_2(\mathcal{W})$. 
   \begin{lemma}\label{lemma:L2}
       Suppose that $\mathcal{P}_2(\mathcal{W})$ is a Wasserstein space supported on Euclidean space $\mathcal{W}$ with the sectional curvature lower bounded by $-\kappa(\kappa>0)$. Under Assumption \ref{assm:gradient}, \ref{ass:g_bounded}, \ref{ass:g_L}, for any $q_t \in \mathcal{K}$, following the updating rule in eq. (\ref{eq:wass_update}), we have:
\begin{align*}
& \bbE[d(q_{t+1}, q_t^*)] \leq \bbE[d(q_{t}, q_t^*)] \\
&- \frac{\Phi}{R}\bbE[(c_t(q_t) + c_t^0(q_t) - c_t(q^*_t) - c_t^0(q^*_t))] \\
& + \sqrt{2\alpha\epsilon_t^2\zeta(\kappa, R) +2\alpha\epsilon_t R},
\end{align*}
where $\Phi = 2\alpha - 3L\alpha^2\zeta(\kappa, R)$.
   \end{lemma}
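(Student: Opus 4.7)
The plan is to treat the update in eq.~(\ref{eq:wass_update}) as an inexact Riemannian gradient step on the Wasserstein manifold, and apply a trigonometric distance inequality valid on Alexandrov spaces of sectional curvature bounded below by $-\kappa$ (in the style of Zhang--Sra). Writing $f_t := c_t + c_t^0$ and the descent direction $v := \nabla f_t(q_t) + e_t$, this inequality supplies the quadratic bound
\begin{align*}
d^2(q_{t+1}, q_t^*) &\leq d^2(q_t, q_t^*) + \alpha^2\,\zeta(\kappa, R)\, \|v\|^2 \\
&\quad - 2\alpha\,\langle v,\, \mathrm{Exp}_{q_t}^{-1}(q_t^*)\rangle_{q_t}.
\end{align*}
This is the Wasserstein counterpart of the Euclidean ``three-point'' step that drives Lemma~\ref{lemma:L1}; the remaining work is to eliminate $v$ in favor of the suboptimality gap $f_t(q_t) - f_t(q_t^*)$ and the error magnitude $\|e_t\|$, and then to take square roots to recover a bound in $d$ rather than $d^2$.

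Next I would split $v$ into its exact and noisy parts. For the inner product, geodesic convexity of $f_t$ (Assumption~\ref{ass:g_L}) gives $\langle \nabla f_t(q_t),\mathrm{Exp}_{q_t}^{-1}(q_t^*)\rangle \leq f_t(q_t^*) - f_t(q_t)$, and Cauchy--Schwarz combined with the bounded diameter (Assumption~\ref{ass:g_bounded}) gives $|\langle e_t,\mathrm{Exp}_{q_t}^{-1}(q_t^*)\rangle| \leq R\,\|e_t\|$. For the squared-norm term, use $\|v\|^2 \leq 2\|\nabla f_t(q_t)\|^2 + 2\|e_t\|^2$, and the Riemannian descent lemma together with g-L-Lipschitz smoothness (Assumption~\ref{ass:g_L}) and vanishing gradient at $q_t^*$ (Assumption~\ref{assm:gradient}) to obtain $\|\nabla f_t(q_t)\|^2 \leq \tfrac{3L}{2}\,(f_t(q_t) - f_t(q_t^*))$. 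Taking expectation with $\mathbb{E}[\|e_t\|]\leq\epsilon_t$ and $\mathbb{E}[\|e_t\|^2]\leq\epsilon_t^2$, the gradient pieces collapse into the coefficient $\Phi = 2\alpha - 3L\alpha^2\zeta(\kappa,R)$ in front of $\mathbb{E}[f_t(q_t) - f_t(q_t^*)]$, while the stochastic residuals aggregate into a noise term of the form $c_1\,\alpha\,\zeta(\kappa,R)\,\epsilon_t^2 + c_2\,\alpha R\,\epsilon_t$.

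Finally, to pass from the resulting $d^2$ bound to the stated $d$ bound, I would apply Jensen's inequality $\bigl(\mathbb{E}[d(q_{t+1},q_t^*)]\bigr)^2 \leq \mathbb{E}[d^2(q_{t+1},q_t^*)]$, together with the elementary estimate $\sqrt{a^2 - b + c}\leq a - \tfrac{b}{2a} + \sqrt{c}$ valid for $0\leq b\leq a^2$ and $c\geq 0$ (which follows from $\sqrt{1-x}\leq 1-x/2$ and sub-additivity of $\sqrt{\,\cdot\,}$). Replacing $\tfrac{1}{2a}$ by $\tfrac{1}{R}$ via the diameter bound (with the constant absorbed into $\Phi$) produces the linear-in-gap descent $-\tfrac{\Phi}{R}\mathbb{E}[f_t(q_t) - f_t(q_t^*)]$ and the square-root residual $\sqrt{2\alpha\,\zeta(\kappa,R)\epsilon_t^2 + 2\alpha R\,\epsilon_t}$ stated in the lemma. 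The hard part is in two places: (i) establishing the Riemannian descent bound $\|\nabla f_t(q_t)\|^2 \leq \tfrac{3L}{2}(f_t(q_t) - f_t(q_t^*))$ on $\mathcal{P}_2(\mathcal{W})$ with the constant that yields exactly $\Phi = 2\alpha - 3L\alpha^2\zeta(\kappa,R)$, which requires a curvature-corrected descent lemma together with Assumptions~\ref{assm:gradient} and~\ref{ass:g_L}; and (ii) tracking $\zeta(\kappa,R)$ through the trigonometric inequality so that the residual collects as written, rather than with an extra factor of $\alpha$ --- this may require a slightly different normalization of the trigonometric bound than the one written above, but is a purely bookkeeping step once the first point is in place.
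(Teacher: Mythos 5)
Your proposal follows essentially the same route as the paper's proof: the Zhang--Sra trigonometric distance inequality applied to the geodesic triangle $(q_t, q_{t+1}, q_t^*)$, geodesic convexity for the inner-product term, Cauchy--Schwarz plus the diameter bound for the error term, a smoothness-based gradient-domination bound via the vanishing gradient at $q_t^*$, expectation over $e_t$, and the $\sqrt{a^2-b+c^2}\le a-\tfrac{b}{2a}+c$ step to pass from $d^2$ back to $d$. The only divergence is bookkeeping: the paper expands $\|\nabla c_t(q_t)+\nabla c_t^0(q_t)+e_t\|^2$ exactly and controls the cross term with $2ab\le a^2+b^2$ together with the standard bound $\|\nabla c_t(q_t)+\nabla c_t^0(q_t)\|^2\le 2L\,(c_t(q_t)+c_t^0(q_t)-c_t(q_t^*)-c_t^0(q_t^*))$ (its eq.~(\ref{eq:fact})), rather than your $\|v\|^2\le 2\|\nabla f_t\|^2+2\|e_t\|^2$ split, so the nonstandard constant $\tfrac{3L}{2}$ you flag as the hard part is not actually needed --- the constant in $\Phi$ comes out of the exact expansion (and is, in either derivation, a matter of which elementary bound absorbs the cross term).
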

\begin{proof}
    The proof can be found in Appendix \ref{appendix:L2}
\end{proof}
Using Lemma \ref{lemma:L2} and the definition of dynamic regret in eq. (\ref{eq:reg_def}), we give the dynamic regret bound on $\mathcal{P}_2(\mathcal{W})$ in the following Theorem.
\begin{theorem}
\label{theorem:T2}
(Regret Bound over $\mathcal{P}_2(\mathcal{W})$) Under the Assumption \ref{assm:gradient}, \ref{ass:g_bounded}, \ref{ass:g_L}, given a sequence of optimal solutions $\{q_t^*\}$, define the variational budget $V_T:=\sum_{t=1}^Td(q^*_t, q^*_{t+1})$ and the error bound $E_T$. Following the updating rule in eq. (\ref{eq:wass_update}), we have the dynamic regret bound:
\begin{align}
        \mathcal{R}_{\mathcal{P}_2(\mathcal{W})}
    \leq \mathcal{O}(\max(1, E_T, V_T))
\end{align}
\end{theorem}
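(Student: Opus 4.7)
The plan is to mirror the Euclidean argument underlying Theorem \ref{theorem:T1}, replacing the standard triangle inequality in $\mathcal{W}$ with the Wasserstein (geodesic) triangle inequality on $\mathcal{P}_2(\mathcal{W})$, and then driving everything through Lemma \ref{lemma:L2}. Concretely, the per-iterate descent inequality supplied by Lemma \ref{lemma:L2} can be rearranged to isolate the instantaneous excess cost:
\begin{align*}
\mathbb{E}\bigl[c_t(q_t)+c_t^0(q_t)-c_t(q_t^*)-c_t^0(q_t^*)\bigr]
\le \frac{R}{\Phi}\Bigl(\mathbb{E}[d(q_t,q_t^*)]-\mathbb{E}[d(q_{t+1},q_t^*)]
+\sqrt{2\alpha\epsilon_t^2\zeta(\kappa,R)+2\alpha\epsilon_t R}\Bigr),
\end{align*}
provided $\Phi=2\alpha-3L\alpha^2\zeta(\kappa,R)>0$, which is enforced by choosing $\alpha$ sufficiently small (analogous to the stepsize choice on $\mathcal{W}$).

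Next I would handle the drift of the optimum. The Wasserstein distance $d$ satisfies the triangle inequality, so $d(q_{t+1},q_t^*)\ge d(q_{t+1},q_{t+1}^*)-d(q_{t+1}^*,q_t^*)$, which yields
\begin{align*}
\mathbb{E}[d(q_t,q_t^*)]-\mathbb{E}[d(q_{t+1},q_t^*)]
\le \mathbb{E}[d(q_t,q_t^*)]-\mathbb{E}[d(q_{t+1},q_{t+1}^*)]+d(q_{t+1}^*,q_t^*).
\end{align*}
Summing the rearranged descent inequality over $t=1,\dots,T$, the first two terms telescope and collapse to $\mathbb{E}[d(q_1,q_1^*)]-\mathbb{E}[d(q_{T+1},q_{T+1}^*)]\le 1+R$ by Assumption \ref{ass:g_bounded}, while the $d(q_{t+1}^*,q_t^*)$ contributions sum to the path variation $V_T$. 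The definition of dynamic regret in eq.~(\ref{eq:reg_def}) then gives
\begin{align*}
\mathbb{E}[\mathcal{R}_{\mathcal{P}_2(\mathcal{W})}(T)]
\le \frac{R}{\Phi}\Bigl((1+R)+V_T+\sum_{t=1}^T\sqrt{2\alpha\epsilon_t^2\zeta(\kappa,R)+2\alpha\epsilon_t R}\Bigr).
\end{align*}

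Finally, I would absorb the noise term into $E_T$. Using $\sqrt{a+b}\le\sqrt{a}+\sqrt{b}$ and Jensen/Cauchy--Schwarz, the square-root sum is controlled by a constant multiple of $E_T+\sqrt{T\cdot E_T}$ (recall $\epsilon_t\ge0$ with $\sum_t\epsilon_t=E_T$); since the increasing-batch schedule in eq.~(\ref{eq:err_comp}) makes $E_T=O(T^{1-\rho/2})$, this still fits inside $O(\max(1,E_T,V_T))$ (up to constants that depend on $\alpha$, $L$, $\kappa$, $R$). Combining the three contributions yields the claimed bound.

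The main obstacle I expect is the noise term: unlike the Euclidean proof which deals with $\mathbb{E}[\|e_t\|^2]$ directly through the smoothness inequality, Lemma \ref{lemma:L2} produces a concave $\sqrt{\cdot}$ of the error, so summing it without losing the sublinear rate requires care (either the $\sqrt{a+b}\le\sqrt{a}+\sqrt{b}$ split above or a Cauchy--Schwarz step against $\sqrt{T}$). The remaining subtlety is making sure the nonnegative curvature correction $\zeta(\kappa,R)$ and the constraint $\Phi>0$ are compatible with a single fixed stepsize $\alpha$ across all rounds, which reduces to the same stepsize calibration used in the Euclidean counterpart and in \citep[Theorem 2]{bedi_tracking_2018}.
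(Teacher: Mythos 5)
Your proposal follows essentially the same route as the paper's own proof: rearrange the per-step inequality from Lemma \ref{lemma:L2}, use the Wasserstein triangle inequality to pass from $d(q_{t+1},q_t^*)$ to $d(q_{t+1},q_{t+1}^*)$ plus the drift $d(q_{t+1}^*,q_t^*)$, telescope and bound the drift sum by $V_T$, and control the noise sum via $\sqrt{a+b}\le\sqrt{a}+\sqrt{b}$ together with a Cauchy--Schwarz step yielding $E_T+\sqrt{TE_T}$. Your treatment is, if anything, slightly more careful than the paper's (you make the $\Phi>0$ stepsize condition explicit), and you share with the paper the same small looseness in absorbing the $\sqrt{TE_T}$ term into $\mathcal{O}(\max(1,E_T,V_T))$.
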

\begin{proof}
     The detail of the proof can be found in Appendix \ref{appendix:T2}.
\end{proof}
Different from the proof of the inexact gradient descent on Euclidean space, we include the trigonometric distance inequality introduced in \cite{zhang_riemannian_2016} and give the first dynamic regret bound for the inexact infinitesimal gradient descent methods over $\mathcal{P}_2(\mathcal{W})$. Note the regret bound here is related to a curvature bound $\kappa$, where we set $\kappa$ as a constant since it is not the key point of this paper.

Since the gradient error is denied in $\mathbb{R}^D$, we can follow the same analysis as Section \ref{sec:prof_w} to bound the gradient error bound $E_T$, which gives a sublinear error bound. As a result, by setting a sublinear increasing constraint for the variational budget $V_T$, we can make sure $R_{\mathcal{P}_2(\mathcal{W})}(T)$ is increasing sublinear. That means the OPVI methods can converge to the dynamic changing target posterior $p_t$ when $T$ is large enough. 

In SVGD, the author didn't consider this gradient error in their algorithm. However, since the gradient error can be viewed as a part of noise added into the updating process, we should not use the whole diffusion noise $\nabla K(x, \cdot)$ in eq. (\ref{eq:svgd_vec}). In the experiment, we set the diffusion term as $0.1\cdot \nabla K(x, \cdot)$ for OPVI. We observe that this trick gives tremendous improvements in performance, especially in some high-dimensional tasks like image classification.

\begin{figure}[t]
\centering 
\scriptsize
\caption{Synthetic experiments for different methods. All methods run 500 rounds. Except for the full batch methods (which use much more data samples), other methods use the same number of data samples.}\label{fig:toy}
\subfigure[OPVI $B_t = t^{0.55} $]{
\begin{minipage}[b]{0.15\textwidth} 
\centering
\includegraphics[width=1\textwidth]{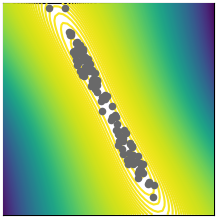} 
\end{minipage}
}
\subfigure[SVGD $B = 20 $]{
\begin{minipage}[b]{0.15\textwidth} 
\centering 
\includegraphics[width=1\textwidth]{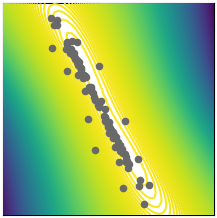} 
\end{minipage}
}
\subfigure[LD $B = 20 $]{
\begin{minipage}[b]{0.15\textwidth} 
\centering
\includegraphics[width=1\textwidth]{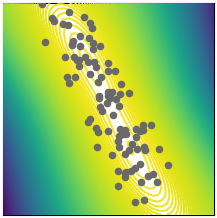} 
\end{minipage}
}

\subfigure[OPVI $B = 20 $]{
\begin{minipage}[b]{0.15\textwidth} 
\centering 
\includegraphics[width=1\textwidth]{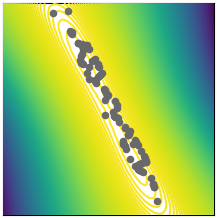} 
\end{minipage}
}
\subfigure[SVGD $B = 10k $]{
\begin{minipage}[b]{0.15\textwidth} 
\centering
\includegraphics[width=1\textwidth]{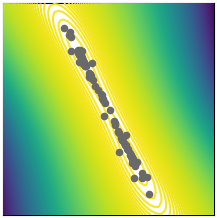} 
\end{minipage}
}
\subfigure[LD $B = 10k $]{
\begin{minipage}[b]{0.15\textwidth} 
\centering 
\includegraphics[width=1\textwidth]{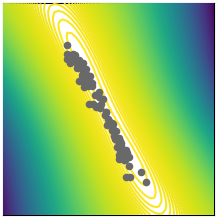}
\end{minipage}}
\vspace{-5em}
\end{figure}
\section{Experiments}
In this section, we test the performance of the proposed OPVI algorithm, and compare it with two famous Bayesian sampling methods, the LD \cite{welling_bayesian_2011} and SVGD \cite{liu2016stein}. We run these methods with three types of batch settings, mini-batch with increasing batch size, mini-batch with static batch size, and full batch. To make the comparison fair, we set a Fixed Iterations and Total Data Samples (FITDS) policy for experiments under the mini-batch setting, which means we set the total number of data samples $N_T$ and the total number of time slots $T$ to be same for each experiment.

Except for the full-batch methods, all algorithms follow the FITDS policy. For a dataset of nearly 10k data samples, we run all methods for 500 rounds and set $B = 20$ for the static batch size methods and $B_t = t^{0.55}$ for the increasing batch size methods to keep $N_T$ same. For full batch methods, we use all 10k data samples in each round to show the best possible results. All experiments are run under the same setting (unless otherwise stated), codes for these experiments are available at \url{https://github.com/yifanycc/OPVI}.
\vspace{-1em}
\subsection{Synthetic Experiments}
The synthetic experiments follow the setting in \cite{welling_bayesian_2011} that conduct a simple example with two parameters, based on the mixture Gaussian distribution:
\begin{align*}
 &\left(\theta_1, \theta_2\right) \sim \mathcal{N}\left((0,0), \operatorname{diag}\left(\sigma_1^2, \sigma_2^2\right)\right) \\
  &x_i \sim 0.5 \cdot \mathcal{N}\left(\theta_1, \sigma_x^2\right)+0.5 \cdot \mathcal{N}\left(\theta_1+\theta_2, \sigma_x^2\right),
\end{align*}
where $\sigma_1^2 = 10$, $\sigma_2^2 = 1$ and $\sigma_x = 2$.
			\begin{table}[t]
   \caption{Results on a BNN classification task on the Kin8nm dataset, averaged over 20 tries.}
   \label{tab:bnn}
			\centering
			\begin{tabular}{l|c|c|c}
				\toprule
				Methods &  Avg. RMSE & Avg. LL & Time\\
				\midrule
				OPVI $B_t = t^{0.55}$   & $.127\pm.008$   &$.653\pm.060$ & 2.4\\
				OPVI $B = 20$           & $.145\pm.003$   &$.516\pm.021$ & 2.4\\
    	      SVGD $B=20$             & $.144 \pm.003$      & $ .525\pm .019$ &2.4\\
				SVGD $B=10k$            & $ .112\pm .002 $ & $ .783\pm.017 $ &5.8\\
    	      LD $B=20$               & $ .159\pm.004 $ & $.425 \pm .024$ & 1.7\\
				LD $B=10k$              &  $ .143\pm.002 $ & $.527 \pm.015 $ & 5.2 \\
				\bottomrule
			\end{tabular}
		\end{table}
Here, we draw approximately 10,000 data samples from the above distribution with $\theta_1 = 0$ and $\theta_2 =1$. Except for the full-batch methods, all algorithms follow the FITDS policy. Fig. \ref{fig:toy} shows the results for the OPVI, SVGD, and LD with 100 particles, where the true posteriors are shown as contour and the inference results are represented by the particles.

As we can observe from the result, the proposed increasing batch size OPVI gives a better result than the static batch size OPVI, which is caused by the use of increasing batch size as a variance reduction method. Compared with previous SVGD and LD, the OPVI method shows much better performance for tracking the posterior. That should be led by the influence of the gradient noise on the noise injection process of the LD method since we use a smaller diffusion term to offset the gradient error. In the last two figures, we can see the performance of OPVI is approaching or even better than the full batch methods.
\subsection{Bayesian Neural Network (BNN) Experiment}
In this subsection, we further compare our work with SVGD and LD on some Bayesian Neural Networks (BNN) tasks. We follow the experiment setting in \cite{Liu2015}, which uses a single hidden layer BNN with 50 hidden units. We use a Gamma(1, 0.1) function in the prior distribution, Kin8nm as the dataset and divide the dataset randomly 90\% for training and 10\% for testing.
For all methods, we set the number of particles to 20.

All ParVI methods use the same stepsize, except for LD, which uses a smaller but best possible stepsize. We test the Root Mean Squared Error (RMSE) and the test Log-Likelihood (LL). The experiment results are shown in Table. \ref{tab:bnn}. The OPVI algorithm can achieve an \textbf{ 11.8\% and 20.1\% improvement} compared with SVGD and LD with the same total number of data $N_T$ and the same total time slots $T$ respectively. This result is even comparable to the full batch SVGD algorithm. Note that the running time for OPVI is the same as the SVGD algorithm, which is less than half of the full batch methods.
\vspace{-1em}
\subsection{Image classification Task}
\begin{figure}
\centering 
\includegraphics[width=0.3\textwidth]{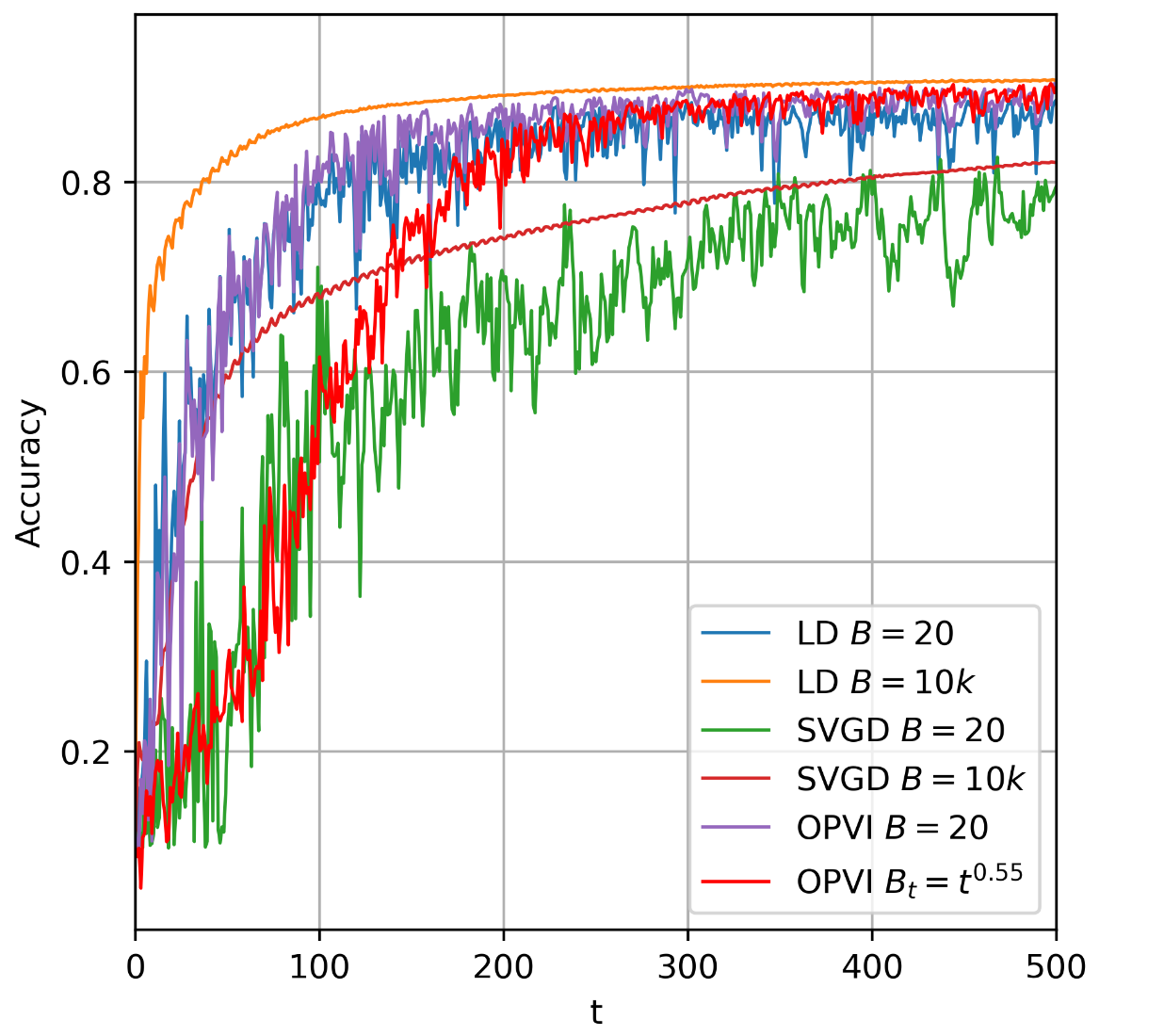}
\caption{Learning Curve}\label{fig:cl}
\vspace{-2em}
\end{figure}
Finally, we conduct experiments to test the performance of the proposed algorithm on a high-dimensional image classification problem. The dataset we used is the MNIST dataset, which contains 60,000 training cases and 10,000 test cases. We consider a two-layer BNN model with 100 hidden variables, with a sigmoid input layer and a softmax output layer. All experiments are using 20 particles. The comparison result is shown in Fig. \ref{fig:cl}. As we can see from the figure, except for the full batch LD algorithm, the OPVI algorithm with an increasing batch size achieves the best result. However, the full batch LD method uses much more time (30 times) and data samples (500 times), and the result is similar. We can observe that the noise of the increasing batch size OPVI is decreasing with $t$ increase, which verifies our analysis for the gradient error. An interesting thing is that SVGD shows poor performance in this high-dimensional task, which may lead by an incorrect approximation for the diffusion term with limited particle numbers. Instead, we improve the 
diffusion term in OPVI, which solves this problem.
\vspace{-1em}
\section{Conclusion}
In this paper, we consider the OPVI algorithm as a possible sampling method for the intractable posterior under the online setting. To reduce the variance, we include an increasing batch size scheme and analyze the influence of the choice of batch size on the performance of the algorithm. Furthermore, we develop a detailed analysis by understanding the algorithm as a Wasserstein gradient flow. Experiments show the proposed algorithm outperforms other naive online particle-based VI and online MCMC methods.


\bibliographystyle{plainnat}
\bibliography{Bib/refs}{}


\clearpage
\onecolumn
\appendix
\section{Proof of Theorem \ref{theorem:T1}}\label{appendix:T1}
The main idea of this proof follows \citep[Theorem 2]{bedi_tracking_2018}. We start by providing a Lemma that gives the relationship between the distance $d(w_{t+1}, w^*_t)$ and the quantity $c_t(w_t) + \eta_t c_0(w_t) - c_t(w_t^*) - \eta_tc_0(w_t^*)$. Then, we use this Lemma to prove the Theorem \ref{theorem:T1}.

\begin{lemma}\label{lemma:L1}
Under Assumptions 1 - 3, given a sequence of optimal solutions $\{w_t^*\}$, gradient error $\mathbb{E}[e_t] \leq \epsilon_t$ and the updating policy eq. (\ref{eq:grad}), the online MAP algorithm adheres to the following inequality:
\begin{align}\label{eq:l1}
   & \mathbb{E} [\|w_{t+1} - w_t^*\| ]\overset{(a)}{\leq}  \mathbb{E} [\|w_{t} - w_t^*\|] \nonumber \\
    &- \frac{\xi}{R} \mathbb{E} [(c_t(w_t) +\eta_t c_0(w_t) - c_t(w_t^*)-\eta_t c_0(w_t^*))] \nonumber \\
    & + \sqrt{2\alpha^2\epsilon_t^2 + 2 \alpha \epsilon_t R},
\end{align}
where $\xi := 2\alpha - 4L\alpha^2$.
\end{lemma}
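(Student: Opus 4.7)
The inequality has the shape of a standard inexact projected-gradient descent recursion, so the natural route is to first establish a squared-distance bound and then convert it to a distance bound via a square-root trick. I will first unpack the update by writing the noisy gradient as the sum of the true gradient and the error, $v_t(w_t) = g_t(w_t) + e_t$ with $g_t(w_t) := \nabla c_t(w_t) + \eta_t \nabla c_0(w_t)$. Using nonexpansivity of $\Pi_{\mathcal{W}}$ (Assumption~\ref{assm:bound} is used only later to bound the diameter) and expanding,
\begin{align*}
\|w_{t+1}-w_t^*\|^2
&\le \|w_t-w_t^*\|^2 -2\alpha\langle g_t(w_t),w_t-w_t^*\rangle \\
&\quad -2\alpha\langle e_t,w_t-w_t^*\rangle + \alpha^2\|g_t(w_t)+e_t\|^2 .
\end{align*}

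\textbf{Key ingredients.} I will control the three non-leading terms as follows. (i) By convexity of $c_t+\eta_t c_0$ (Assumption~\ref{assm:lipschitz}), $\langle g_t(w_t), w_t-w_t^*\rangle \ge \Delta_t := c_t(w_t)+\eta_t c_0(w_t)-c_t(w_t^*)-\eta_t c_0(w_t^*)$. (ii) By co-coercivity for $L$-smooth convex functions, together with the vanishing-gradient Assumption~\ref{assm:gradient} that says $g_t(w_t^*)=0$, I get $\|g_t(w_t)\|^2 = \|g_t(w_t)-g_t(w_t^*)\|^2 \le 2L\,\Delta_t$; combined with the elementary bound $\|g_t+e_t\|^2 \le 2\|g_t\|^2+2\|e_t\|^2$ this contributes $4L\alpha^2\Delta_t + 2\alpha^2\|e_t\|^2$. (iii) Cauchy--Schwarz and Assumption~\ref{assm:bound} give $-\langle e_t,w_t-w_t^*\rangle \le R\|e_t\|$. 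Substituting these in produces the squared-distance recursion
\begin{align*}
\|w_{t+1}-w_t^*\|^2
\le \|w_t-w_t^*\|^2 - \xi\,\Delta_t
+ 2\alpha R\|e_t\| + 2\alpha^2\|e_t\|^2,
\end{align*}
with $\xi = 2\alpha - 4L\alpha^2$, which already carries the right constant in front of $\Delta_t$.

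\textbf{From squared to linear distance.} Next I convert this into a bound on $\|w_{t+1}-w_t^*\|$ itself. The idea is to complete the square: using $\|w_t-w_t^*\|\le R$, I will check the elementary inequality
\begin{align*}
\|w_t-w_t^*\|^2 - \xi\Delta_t
\;\le\; \Bigl(\|w_t-w_t^*\| - \tfrac{\xi}{R}\Delta_t\Bigr)^{\!2},
\end{align*}
which reduces to $\xi\Delta_t\bigl(\tfrac{\xi\Delta_t}{R^2}-2\tfrac{\|w_t-w_t^*\|}{R}+2\bigr)\ge 0$ and follows from $\|w_t-w_t^*\|\le R$ plus nonnegativity of $\Delta_t$. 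Then $\sqrt{a^2+b}\le a+\sqrt{b}$ (for $a,b\ge 0$), applied with $a=\|w_t-w_t^*\|-\tfrac{\xi}{R}\Delta_t$ and $b=2\alpha R\|e_t\|+2\alpha^2\|e_t\|^2$, yields the deterministic analogue of \eqref{eq:l1}. Finally, I take expectations and use $\mathbb{E}\|e_t\|\le\epsilon_t$ together with Jensen on the concave square root to pull the expectation inside.

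\textbf{Main obstacle.} The only non-routine step is the conversion from the squared-distance recursion to the linear-distance recursion while preserving the $\Delta_t$ term with coefficient $\xi/R$; this is exactly where the bounded-diameter Assumption~\ref{assm:bound} is essential (without it one cannot turn $-\xi\Delta_t$ into a linear term that survives the square root). A secondary subtlety is that the noise terms appear both linearly and quadratically in $\|e_t\|$, so one must be careful when moving to expectations: using $\mathbb{E}\|e_t\|\le\epsilon_t$ and $\mathbb{E}\|e_t\|^2\le\epsilon_t^2$ (Jensen's inequality on $\|\cdot\|$ versus $\|\cdot\|^2$) gives exactly the combination $\sqrt{2\alpha^2\epsilon_t^2+2\alpha\epsilon_t R}$ appearing in the statement.
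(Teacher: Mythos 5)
Your derivation of the squared-distance recursion is essentially the paper's own argument: the same decomposition $v_t = g_t + e_t$, convexity for the inner-product term, the bound $\|g_t(w_t)\|^2 \le 2L\,\Delta_t$ obtained from smoothness plus the vanishing gradient at $w_t^*$, and Cauchy--Schwarz with the diameter for the noise cross-term; you land on the same recursion with the same $\xi = 2\alpha - 4L\alpha^2$. The gap is in your conversion from the squared to the linear distance. The ``elementary inequality'' $\|w_t-w_t^*\|^2 - \xi\Delta_t \le \bigl(\|w_t-w_t^*\| - \tfrac{\xi}{R}\Delta_t\bigr)^2$ is false. Writing $u = \|w_t - w_t^*\|$, the difference of the two sides is $\xi\Delta_t\bigl(1 - \tfrac{2u}{R} + \tfrac{\xi\Delta_t}{R^2}\bigr)$ (your reduction has a $+2$ where the correct constant is $+1$), and this is negative whenever $u$ is close to $R$ and $\xi\Delta_t < R^2$; e.g.\ $u = R$, $\xi\Delta_t = R^2/2$ gives $-R^2/4$. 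What completing the square actually yields, using $u \le R$, is $u^2 - \xi\Delta_t \le \bigl(u - \tfrac{\xi}{2R}\Delta_t\bigr)^2$, i.e.\ a coefficient $\xi/(2R)$ rather than $\xi/R$. (The paper's own conversion, via $\sqrt{a^2 - b + c^2} \le a - \tfrac{b}{2a} + c$ and $a \le R$, also only delivers $\xi/(2R)$; the stated $\xi/R$ appears to carry a spurious factor of $2$ in both arguments, which is harmless for the regret order but means neither route proves the lemma exactly as written.) You should also verify the sign condition for $\sqrt{a^2 + b} \le a + \sqrt{b}$, which requires $a = u - \tfrac{\xi}{2R}\Delta_t \ge 0$; this is where $u \le R$ and $\Delta_t \le$ (something controlled by $R$ and $\xi$) has to be invoked, or one falls back on the paper's form of the inequality which makes the requirement $a^2 > b$ explicit.

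A secondary point: your parenthetical that $\mathbb{E}\|e_t\|^2 \le \epsilon_t^2$ follows from $\mathbb{E}\|e_t\| \le \epsilon_t$ ``by Jensen'' is backwards --- Jensen gives $(\mathbb{E}\|e_t\|)^2 \le \mathbb{E}\|e_t\|^2$, so the second-moment bound is an additional hypothesis, not a consequence. The paper quietly needs the same second-moment bound (and indeed elsewhere works with $\mathbb{E}[\|e_t\|^2]$ directly, cf.\ eq.~(\ref{eq:ek})), so this is a shared wrinkle rather than a flaw unique to your write-up, but as stated your justification is not valid.
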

\begin{proof} 
We start by proving a fact with Assumption \ref{assm:lipschitz}. For any $w\in\mathcal{W}$, by the smoothness and convexity of $c_t(w) + \eta_t c_0(w)$,\chang{make a citation here \yyf{(fixed)}} in \citep[Lemma 4]{zhou2018fenchel} we have:
\begin{align*}
    c_t(w) + \eta_t c_0(w) - c_t(w_t) - \eta_t c_0(w_t) \leq  \langle\nabla c_t(w_t) + \eta_t \nabla c_0(w_t), w - w_t \rangle  + \frac{L}{2}\|w - w_t\|^2
\end{align*}

Here, we set a specific value for $w$ as $w = w'_t := w_t - \frac{1}{L}(\nabla c_t(w_t) + \eta_t \nabla c_0(w_t))$ in the above inequality and get:
\begin{align*}
    c_t(w'_t) + \eta_t c_0(w'_t) - c_t(w_t) - \eta_t c_0(w_t) \leq - \frac{1}{2L}\|\nabla c_t(w_t) + \eta_t \nabla c_0(w_t)\|^2.
\end{align*}

On the other hand, by the convexity of $c_t(w)$ and $c_0(w)$ and the vanishing gradient assumption $\nabla c_t(w^*_t) + \eta_t \nabla c_0(w^*_t) = 0 $ in Assumption \ref{assm:gradient}, we have:
\begin{align*}
    c_t(w'_t) + \eta_t c_0(w'_t) \geq  c_t(w^*_t) + \eta_t c_0(w^*_t) + (\nabla c_t(w^*_t) + \eta_t \nabla c_0(w^*_t))^\top (w'_t - w^*_t) = c_t(w^*_t) + \eta_t c_0(w^*_t),
\end{align*}
which leads to the following inequality of interest:
\begin{align*}
     c_t(w^*_t) + \eta_t c_0(w^*_t) - c_t(w_t) - \eta_t c_0(w_t) \leq  c_t(w'_t) + \eta_t c_0(w'_t) - c_t(w_t) - \eta_t c_0(w_t)  \leq - \frac{1}{2L}\|\nabla c_t(w_t) + \eta_t \nabla c_0(w_t)\|^2,
\end{align*}
which is equivalent to:
\begin{align}\label{eq:fact}
    \|\nabla c_t(w_t) + \eta_t \nabla c_0(w_t)\|^2 \leq 2L (c_t(w_t) + \eta_t c_0(w_t) -  c_t(w^*_t) - \eta_t c_0(w^*_t)).
\end{align}

Following eq. \ref{eq:err_def}, we use true gradient $\nabla c_t(w_t)$ and a gradient error $e_t$ to represent the approximated gradient $\nabla \hat{c}_t(w_t)$, which simpify the the updating policy eq. (\ref{eq:grad}) as:
\begin{align}\label{eq:grad}
    \mathbf{w}_{t}= \begin{cases}\mathbf{w}_1 \in \mathcal{W} & t=1 \\ \Pi_{\mathcal{W}}[w_{t-1} - \alpha (\nabla c_t(w_t)+ e_t  + \eta_t \nabla c_0(w_{t-1}))] & t>1\end{cases},\\
\end{align}

following eq. \ref{eq:err_def} to highlight the influence of the gradient error and simplify the proof.
Then, we bound the left-hand side of Lemma \ref{lemma:L1} by evolving the updating policy eq. (\ref{eq:grad}) in $\|w_{t+1} - w_t^*\|^2$, then:
\chang{Also give $\alpha$ a $t$ subscript? \yyf{we consider the static stepsize here, or the proof will be intractable in the end}}
\begin{align}
      & \|w_{t+1} - w_t^*\|^2 =  \|w_{t}  - \alpha(\nabla c_t(w_t) +\eta_t \nabla c_0(w_t) + e_t)- w_t^*\|^2 \\
      \overset{(a)}{=} & \|w_{t} - w_t^*\|^2 - 2\alpha  (\nabla c_t(w_t) +\eta_t \nabla c_0(w_t))^\top (w_{t} - w_t^* ) + \alpha^2 \|\nabla c_t(w_t) +\eta_t \nabla c_0(w_t)\|^2  \\
      & + \alpha^2 \|e_t\|^2+ 2\alpha^2  e_t^\top (\nabla c_t(w_t) +\eta_t \nabla c_0(w_t))- 2\alpha e_t^\top (w_{t} - w_t^*)  \\
      \overset{(b)}{\le} & \|w_{t} - w_t^*\|^2 - 2\alpha (c_t(w_t) +\eta_t c_0(w_t) - c_t(w_t^*) -\eta_t c_0(w_t^*))+ 2\alpha^2 L (c_t(w_t) +\eta_t c_0(w_t) - c_t(w_t^*) -\eta_t c_0(w_t^*)) \\
      & + \alpha^2 \|e_t\|^2+ 2\alpha^2 e_t^\top (\nabla c_t(w_t) +\eta_t \nabla c_0(w_t))- 2\alpha e_t^\top (w_{t} - w_t^*),
      \label{eq:l1_first}
\end{align}
where (a) can be obtained by expanding the squared term, (b) is following the convexity property that $f(b) - f(a) \geq \nabla c(a)^\top (b-a)$ and eq. (\ref{eq:fact}).
Next, we bound the last two terms in eq. (\ref{eq:l1_first}) separately. We take expectation on the sequence of $\{e_t\}$, which is denoted by $\mathbb{E}_{e_t}$ and get:
\begin{align}\label{eq1}
    & \mathbb{E}_{e_t} [2\alpha^2 e_t^\top (\nabla c_t(w_t) + \eta_t \nabla c_0(w_t)) - 2\alpha  e_t^\top (w_{t} - w_t^*)] \\
    \leq{} & 2\alpha\mathbb{E}_{e_t}[\|e_t\|](\alpha\|\nabla c_t(w_t) +\eta_t \nabla c_0(w_t)\| + \|w_t-w_t^*\|) \\
    \overset{(a)}{\leq} & 2\alpha \epsilon_t(\alpha\|\nabla c_t(w_t) +\eta_t \nabla c_0(w_t)\| + \|w_t-w_t^*\|) \\
    \leq{} & 2\alpha^2 \epsilon_t \mathbb{E} [\|\nabla c_t(w_t)+\eta_t \nabla c_0(w_t)\|] + 2 \alpha \epsilon_t R\\
    \overset{(b)}{\leq}{} & \alpha^2(\epsilon_t^2 + \|\nabla c_t(w_t) +\eta_t \nabla c_0(w_t)\|^2) + 2 \alpha \epsilon_t R \\
    \leq{} & \alpha^2(\epsilon_t^2 + \|\nabla c_t(w_t) +\eta_t \nabla c_0(w_t)\|^2) + 2 \alpha \epsilon_t R \\
    \overset{(c)}{\leq} & \alpha^2\epsilon_t^2 + 2L\alpha^2 (c_t(w_t) +\eta_t c_0(w_t) - c_t(w_t^*) -\eta_t c_0(w_t^*)) + 2 \alpha \epsilon_t R,
\end{align}
where (a) is obtained by the definition of the stochastic gradient error in Section \ref{sec:err_def}, (b) is given by using the fact $2ab \leq a^2 +b^2$ for the first term and (c) follows eq. (\ref{eq:fact}). Taking expectations for eq. (\ref{eq:l1_first}) and combining the above inequalities lead to:
\begin{align}\label{eq:rel}
     \mathbb{E}_{e_t} [\|w_{t+1} - w_t^*\|^2]
      \leq & \|w_{t} - w_t^*\|^2 - 2\alpha (c_t(w_t) +\eta_t c_0(w_t) - c_t(w_t^*) -\eta_t c_0(w_t^*)) \\
     & {}+ 2\alpha^2 L (c_t(w_t) +\eta_t c_0(w_t) - c_t(w_t^*) -\eta_t c_0(w_t^*)) \\
     & {}+ \alpha^2 \mathbb{E}_{e_t} \|e_t\|^2+ \alpha^2\epsilon_t^2 + 2L\alpha^2 (c_t(w_t) +\eta_t c_0(w_t) - c_t(w_t^*) -\eta_t c_0(w_t^*)) + 2 \alpha \epsilon_t R\\
     \leq{} &\|w_{t} - w_t^*\|^2 - \xi(c_t(w_t) +\eta_t c_0(w_t) - c_t(w_t^*) -\eta_t c_0(w_t^*)) + 2\alpha^2\epsilon_t^2 + 2 \alpha \epsilon_t R,
\end{align}
where $\xi := 2\alpha - 4L\alpha^2$.

Finally, we take full expectations (for both $\{e_t\}$ and $\{w_t^*\}$) as $\mathbb{E}$ and take a root on both sides of the above equation and give:
\begin{align*}
    \mathbb{E} [\|w_{t+1} - w_t^*\|] &\leq  \mathbb{E} [[ \mathbb{E}_{e_t}[\|w_{t+1} - w_t^*\|^2]]^{1/2}]\\
    &\leq \mathbb{E} [\|w_{t} - w_t^*\|] - \frac{\xi}{R} \mathbb{E} [(c_t(w_t) +\eta_t c_0(w_t) - c_t(w_t^*)-\eta_t c_0(w_t^*))]+ \sqrt{2\alpha^2\epsilon_t^2 + 2 \alpha \epsilon_t R},
\end{align*}
\chang{How to justify $\sqrt{\mathbb{E} [\| w_t - w_t^* \|^2]} \le \mathbb{E} \| w_t - w_t^* \|$? The opposite holds.\yyf{all three terms are positive, then use the fact. I have added proof of the fact below.} }where the inequality follows $\|w_t - w_t^*\|\leq R$ and the fact $\sqrt{a^2 - b +c^2} \leq a - \frac{b}{2a} +c$ proved as following:
\begin{align*}
    \sqrt{a^2 - b +c^2}& \leq \sqrt{a^2(1 - \frac{b}{2a^2})^2+c^2}\\
    & \leq a(1- \frac{b}{2a^2})  +c\\
    &= a - \frac{b}{2a} +c,
\end{align*}
where $a, b, c$ are all positive and $a^2 > b$.
\end{proof}


Note that if we take a summation over $t\in[1, T]$ on the second term on the right side of eq. (\ref{eq:l1}), we can get the regret $\mathcal{R}(T)$. However, we can divide the left side of eq. (\ref{eq:l1}) into two parts with triangle inequality for a tighter bound, which give the proof for Theorem \ref{theorem:T1} as follows.

First, we start with using triangle inequality on a quantity $\mathbb{E}[ \|w_{t+1} - w_{t+1}^*\|]$, which gives:
\begin{align*}
   \mathbb{E}[ \|w_{t+1} - w_{t}^*\|]
   & \leq  \mathbb{E}[\|w_{t+1} - w_t^*\| ]+ \mathbb{E}[\|w_{t+1}^* - w_t^*\|] \\
    &\overset{(a)}{\leq}    \mathbb{E} [\|w_{t} - w_t^*\|] - \frac{\xi}{R} \mathbb{E} [(c_t(w_t) +\eta_t c_0(w_t) - c_t(w_t^*)-\eta_t c_0(w_t^*))]+ \sqrt{2\alpha^2\epsilon_t^2 + 2 \alpha \epsilon_t R} + \|w_{t+1}^* - w_t^*\|
\end{align*}

Rearranging the above inequality and take summation for $t\in [1, T]$, by the definition of the dynamic regret in eq. (\ref{eq:reg_def}) we have:
\begin{align*}\label{eq:reg}
 \mathbb{E}[\mathcal{R}_{\mathcal{W}}(T)] & = \sum_{t=1}^T \mathbb{E}[(c_t(w_t) + \eta_t c_0(w_t) - c_t(w_t^*) - \eta_tc_0(w_t^*)) ]\\
 & \overset{(a)}{\leq} \frac{R}{\xi}(\sum_{t=1}^T (\mathbb{E} [\|w_{t} - w_t^*\|] - \mathbb{E}[ \|w_{t+1} - w_{t+1}^*\|])+ \sum_{t=1}^T \sqrt{2\alpha^2\epsilon_t^2 + 2 \alpha \epsilon_t R} + \sum_{t=1}^T \|w_{t+1}^* - w_t^*\|) \\
 &  \overset{(b)}{\leq} \frac{R}{\xi}(\|w_1 - w_1^*\| - \|w_{T+1} - w_{T+1}^*\|+\sum_{t=1}^T \sqrt{2\alpha^2\epsilon_t^2 + 2 \alpha \epsilon_t R} + \sum_{t=1}^T\|w_{t+1}^* - w_t^*\| )\\
& \leq \frac{R^2}{\xi} + \frac{R}{\xi} \sum_{t=1}^T \sqrt{2\alpha^2\epsilon_t^2 + 2 \alpha \epsilon_t R} + \frac{RV_T}{\xi}
\end{align*}
where (a) is obtained by using Lemma \ref{lemma:L1} and (b) follows Assumption \ref{assm:bound} and the definition of variational budget $V_T$. The second term in the last inequality can be bounded by:
\begin{align*}
    &\sum_{t=1}^T\sqrt{2\alpha^2\epsilon_t^2+2 \alpha \epsilon_t R}\\
     \leq & \sum_{t=1}^T\sqrt{2\alpha^2\epsilon_t^2}+\sqrt{2 \alpha \epsilon_t R} \\
     \leq & \sqrt{2}\alpha\sum_{t=1}^T\epsilon_t + \sqrt{2\alpha R}\sum_{t=1}^T\sqrt{\epsilon_t}\\
    \overset{(a)}{\leq} & \sqrt{2}\alpha E_T + \sqrt{2\alpha R} \sqrt{T \sum_{t=1}^T \epsilon_t }\\
    \leq & \sqrt{2}\alpha E_T + \sqrt{2\alpha RT E_T },
\end{align*}

where (a) can be obtained by Hölder's inequality as $\|\bmvv\|_1 \leq \sqrt{T}\|\bmvv\|_2$ for a vector $\bm{v} = \{\sqrt{\epsilon_1}, \cdots, \sqrt{\epsilon_T}\}$
Taking it back, we get the dynamic regret of:
\begin{align*}
     \mathbb{E}[\mathcal{R}_{\mathcal{W}}(T)]  & \leq  \frac{R^2}{\xi} + \frac{R}{\xi} ( \sqrt{2}\alpha E_T + \sqrt{2\alpha RT E_T } )+ \frac{R V_T}{\xi} \\
     & \leq \mathcal{O}(\max(1, E_T, V_T))
\end{align*}


\section{Proof of Lemma \ref{lemma:L2}}\label{appendix:L2}
\begin{proof}
We start from a fact proved in Lemma 6 of \cite{zhang2016first}, which gives an inequality for a geodesic triangle with curvature bounded by $\kappa$, where the length of sides for the triangle is $a$, $b$, $c$ and $A$ is the angle between sides $b$ and $c$, then:
\begin{align*}
    a^2 \leq \frac{\sqrt{|\kappa|} c}{\tanh (\sqrt{|\kappa|} c)} b^2+c^2-2 b c \cos (A),
\end{align*}
\chang{where $A$ is ...\yyf{(fixed)}}

In our work, we map our problem on a triangle, where the vertices of this triangle is set to be three status of the decisions in our problem, the current step decision $q_t$, the next step decision $q_{t+1}$ and the optimal solution in current step $q_t^*$. Denote $d(a, b)$ to be the Wasserstein distance between two distribution $a$ and $b$ over $\mathcal{P}_2(\mathcal{W})$. As a result, the three sides of the triangle should be $a = d(q_{t+1}, q_t^*)$, $b = d( q_t, q_{t+1})$ and $c = d(q_{t}, q_t^*)$. Base on the updating rule, we have $d( q_t, q_{t+1}) = \alpha \|\nabla c_t(q_t) + e_t + \nabla c^0_t(q_t)\|$ when $\alpha$ is small enough. \chang{Why? You mean for infinitesimal update? (And add a norm on the right hand side) But due to the optimal-transport interpretation, you may have $d(q_{t+1},q_t) \le \alpha \| \nabla c_t(q_t) + e_t + \nabla c^0_t(q_t) \|$ without approximation.\yyf{use infinitesimal update to do approximation. Using inequality lead to problem in bounding the term $-bc\cos A$}} and $d( q_t, q_{t+1})d(q_{t}, q_t^*) \cos(\angle q_{t+1}q_t q_t^*) = \langle -\alpha(\nabla c_t(q_t) + e_t +\nabla c^0_t(q_t)), \Exp^{-1}_{q_t}(q_t^*)\rangle $.\chang{Why? \yyf{by the inner product of vector $\bmaa\cdot \bmbb = \|a\|\|b\|cos\theta$}} Taking all sides into the triangle inequality, we have:
\begin{align*}
     d(q_{t+1}, q_t^*)^2 & \overset{(a)}{=}  \zeta(\kappa, d(q_{t}, q_t^*)) d( q_t, q_{t+1})^2+ d(q_{t}, q_t^*)^2-2 	\langle \alpha(\nabla c_t(q_t) + e_t +\nabla c^0_t (q_t), Exp_{q_t}(q_t^*)\rangle \\
     & \leq \zeta(\kappa, d(q_{t}, q_t^*)) (\alpha(\nabla c_t(q_t) + e_t +\nabla c^0_t (q_t))^2+ d(q_{t}, q_t^*)^2 -2 	\langle \alpha(\nabla c_t(q_t) + e_t +\nabla c^0_t (q_t), Exp_{q_t}(q_t^*)\rangle \\
     & \leq d(q_{t}, q_t^*)^2 +  \zeta(\kappa, d(q_{t}, q_t^*)) (\alpha^2(\nabla c_t(q_t)  +\nabla c^0_t (q_t))^2 + \alpha^2e_t^2 + 2\alpha^2 e_t(\nabla c_t(q_t) +\nabla c^0_t (q_t)))\\
     &  - 2\alpha 	\langle(\nabla c_t(q_t)  +\nabla c^0_t (q_t)), Exp_{q_t}(q_t^*)\rangle  - 2\alpha \langle e_t, Exp_{q_t}(q_t^*)\rangle \\
     & \overset{(b)}{\leq}d(q_{t}, q_t^*)^2 +\zeta(\kappa, R)(\alpha^2(\nabla c_t(q_t)  +\nabla c^0_t (q_t))^2 + \alpha^2 e_t^2+ 2\alpha^2 e_t(\nabla c_t(q_t) +\nabla c^0_t (q_t)))\\
     &  - 2\alpha 	\langle(\nabla c_t(q_t)  +\nabla c^0_t (q_t)), Exp_{q_t}(q_t^*)\rangle  - 2 \alpha\langle e_t, Exp_{q_t}(q_t^*)\rangle \\
     & \overset{(c)}{\leq}d(q_{t}, q_t^*)^2 +\zeta(\kappa, R)(2L\alpha^2(c_t(q_t) + c_t^0(q_t) - c_t(q^*_t) - c_t^0(q^*_t)) + \alpha^2 e_t^2+ 2\alpha^2 e_t(\nabla c_t(q_t) +\nabla c^0_t (q_t)))\\
     &  - 2\alpha (c_t(q_t) + c_t^0(q_t) - c_t(q^*_t) - c_t^0(q^*_t))  - 2 \alpha\langle e_t, Exp_{q_t}(q_t^*)\rangle \\
     & = d(q_{t}, q_t^*)^2 +2L\alpha^2\zeta(\kappa, R)(c_t(q_t) + c_t^0(q_t) - c_t(q^*_t) - c_t^0(q^*_t)) + \zeta(\kappa, R)\alpha^2 e_t^2\\
     &- 2\alpha (c_t(q_t) + c_t^0(q_t) - c_t(q^*_t) - c_t^0(q^*_t)) + 2\alpha^2 e_t\zeta(\kappa, R)(\nabla c_t(q_t) +\nabla c^0_t (q_t))- 2 \alpha\langle e_t, Exp_{q_t}(q_t^*)\rangle
\end{align*}

where (a) follows \citep[Lemma 6]{zhang2016first}, $\zeta(\kappa, d(q_{t}, q_t^*)) = \frac{\sqrt{|\kappa|} d(q_{t}, q_t^*)}{\tanh (\sqrt{|\kappa|} d(q_{t}, q_t^*)} $, (b) follows the assumption \ref{ass:g_bounded}, (c) follows the fact proved in eq. (\ref{eq:fact}) and the convexity of $c_t(q_t)+c_t^0(q_t)$ Then, we bound the last two terms in the above inequality with the expectation on the sequence of $\{e_t\}$, which is denoted by $\mathbb{E}_{e_t}$ and get::
\begin{align*}
    &\bbE_{e_t}[ 2\alpha^2 e_t\zeta(\kappa, R)(\nabla c_t(q_t) +\nabla c^0_t (q_t))- 2 \langle e_t, Exp_{q_t}(q_t^*)\rangle] \\
    &\leq 2\alpha^2\zeta(\kappa, R)\bbE_{e_t}[ e_t](\nabla c_t(q_t) +\nabla c^0_t (q_t)) + 2\alpha\bbE[e_t]R\\
   &  \overset{(a)}{\leq} \alpha^2\zeta(\kappa, R)(\epsilon_t^2 +\bbE[(\nabla c_t(q_t) +\nabla c^0_t (q_t))]^2  )+ 2\alpha\epsilon_t R\\
      &  \leq  \alpha^2\zeta(\kappa, R)(\epsilon_t^2 +c_t(q_t) + c_t^0(q_t) - c_t(q^*_t) - c_t^0(q^*_t))+ 2\alpha\epsilon_t R,
\end{align*}

where (a) is obtained by the fact $2ab \leq a^2 +b^2$. Taking the above inequality back gives:
\begin{align*}
     \bbE_{e_t}[d(q_{t+1}, q_t^*)]^2 & \leq d(q_{t}, q_t^*)^2 +2L\alpha^2\zeta(\kappa, R)(c_t(q_t) + c_t^0(q_t) - c_t(q^*_t) - c_t^0(q^*_t)) + \zeta(\kappa, R)\alpha^2 \epsilon_t^2\\
     & - 2\alpha (c_t(q_t) + c_t^0(q_t) - c_t(q^*_t) - c_t^0(q^*_t)) \\
     & + \alpha^2\zeta(\kappa, R)(\epsilon_t^2 +2L c_t(q_t) + c_t^0(q_t) - c_t(q^*_t) - c_t^0(q^*_t)  )+ 2\alpha\epsilon_t R\\
     & =  d(q_{t}, q_t^*)^2 - \Phi(c_t(q_t) + c_t^0(q_t) - c_t(q^*_t) - c_t^0(q^*_t)) + 2\alpha\epsilon_t^2\zeta(\kappa, R) +2\alpha\epsilon_t R,
\end{align*}
where $\Phi = 2\alpha - 3L\alpha^2\zeta(\kappa, R)$.

Finally, using the fact $\sqrt{a^2 - b +c^2} \leq a - \frac{b}{2a} +c$ and  full expectation $\bbE$, we finish the proof:
\begin{align*}
\bbE[d(q_{t+1}, q_t^*)] 
&\leq  \mathbb{E} [[ \mathbb{E}_{e_t}[d(q_{t+1}, q_t^*)^2]]^{1/2}] \\
& \leq \bbE[d(q_{t}, q_t^*)] - \frac{\Phi}{R}\bbE[(c_t(q_t) + c_t^0(q_t) - c_t(q^*_t) - c_t^0(q^*_t))] + \sqrt{2\alpha\epsilon_t^2\zeta(\kappa, R) +2\alpha\epsilon_t R}
\end{align*}

\end{proof}

\section{Proof of Theorem \ref{theorem:T2}}\label{appendix:T2}

\begin{proof}
    
The proof start from Lemma \ref{lemma:L2} with using the triangle inequality, which gives:
\begin{align*}
\bbE[d(q_{t+1}, q_t^*)] & \leq \bbE[d(q_{t+1}, q_t^*)] + \bbE[d(q^*_{t+1}, q_t^*)] \\
&\leq \bbE[d(q_{t}, q_t^*)] - \frac{\Phi}{R}\bbE[(c_t(q_t) + c_t^0(q_t) - c_t(q^*_t) - c_t^0(q^*_t))] + \sqrt{2\alpha\epsilon_t^2\zeta(\kappa, R) +2\alpha\epsilon_t R}+ \bbE[d(q^*_{t+1}, q_t^*)]
\end{align*}

Rearrange the inequity, taking summation from $t\in[1, T]$, evolving the definition of the regret, we have:
\begin{align*}
    \bbE[\mathcal{R}_{\mathcal{P}(\mathcal{W})}(T)] & \leq \frac{R}{\Phi} \sum_{t=1}^T \bbE[ d(q_{t}, q_t^*) - d(q_{t+1}, q_t^*)] + \frac{R}{\Phi} \sum_{t=1}^T \sqrt{2\alpha\epsilon_t^2\zeta(\kappa, R) +2\alpha\epsilon_t R} + \frac{R}{\Phi_t}\sum_{t=1}^T \|q_{t+1}^* - q_t^*\|\\
    &\leq \frac{R}{\Phi} (d(q_{1}, q_1^*) -  d(q_{T+1}, q_t^*)) + \frac{R}{\Phi} \sum_{t=1}^T \sqrt{2\alpha\epsilon_t^2\zeta(\kappa, R) +2\alpha\epsilon_t R} + \frac{R}{\Phi_t}V_T \\
\end{align*}

where the second term can be simplified as:
\begin{align*}
    &\sum_{t=1}^T\sqrt{2\alpha^2\epsilon_t^2+2 \alpha \epsilon_t R}\\
     \leq & \sum_{t=1}^T\sqrt{2\alpha^2\epsilon_t^2}+\sqrt{2 \alpha \epsilon_t R} \\
     \leq & \sqrt{2}\alpha\sum_{t=1}^T\epsilon_t + \sqrt{2\alpha R}\sum_{t=1}^T\sqrt{\epsilon_t}\\
    \leq & \sqrt{2}\alpha E_T + \sqrt{2\alpha R} \sqrt{T \sum_{t=1}^T \epsilon_t }\\
    \leq & \sqrt{2}\alpha\zeta(\kappa, R) E_T + \sqrt{2\alpha RT E_T }
\end{align*}

Then, we finally prove the regret bound as:
\begin{align*}
     \bbE[\mathcal{R}_{\mathcal{P}(\mathcal{W})}(T)] & \leq \frac{R^2}{\xi} + \frac{R}{\xi} (\sqrt{2}\alpha\zeta(\kappa, R) E_T + \sqrt{2\alpha RT E_T }) + \frac{R}{\Phi_t}V_T \\
     & \leq  \mathcal{O}(\max(1, E_T, V_T))
\end{align*}
\end{proof}

\end{document}